\newtheorem{theorem}{Theorem}[section]
\newtheorem{proposition}[theorem]{Proposition}
\newtheorem{definition}[theorem]{Definition}
\newtheorem{appxlem}{Lemma}
\newcommand{\rf}{{\rm ref}}
\newcommand{\bfx}{{\bf x}}
\newcommand{\bft}{{\bf t}}
\newcommand{\reals}{{\mathbb R}}
\journal{ACHA}
\begin{document}
\begin{frontmatter}

\title{{\bf A Theory for Optical flow-based Transport \\ on Image Manifolds}}
\author{Sriram Nagaraj, Aswin C. Sankaranarayanan, Richard G. Baraniuk}
\address{Rice University}

\begin{abstract}
An image articulation manifold (IAM) is the collection of images formed when an object is articulated in front of a camera.  
IAMs arise in a variety of image processing and computer vision applications, where they provide a natural low-dimensional embedding of the collection of high-dimensional images. 
To date IAMs have been studied as embedded submanifolds of Euclidean spaces. 
Unfortunately, their promise has not been realized in practice, because real world imagery typically contains sharp edges that render an IAM non-differentiable and hence non-isometric to the low-dimensional parameter space under the Euclidean metric.
As a result, the standard tools from differential geometry, in particular using linear tangent spaces to transport along the IAM, have limited utility.  
In this paper, we explore a nonlinear transport operator for IAMs based on the {\em optical flow} between images and  develop new analytical tools reminiscent of those from differential geometry using the idea of \emph{optical flow manifolds} (OFMs). 
We define a new metric for IAMs that satisfies certain local isometry conditions, and we show how to use this metric to develop a new tools such as flow fields on IAMs, parallel flow fields, parallel transport, as well as a intuitive notion of curvature. 
The space of optical flow fields along a path of constant curvature has a natural multi-scale structure via a monoid structure on the space of all flow fields along a path. 
We also develop lower bounds on approximation errors while approximating non-parallel flow fields by parallel flow fields. 
\end{abstract}

\begin{keyword}
Image articulation manifolds, Transport operators, Optical flow
\end{keyword}

\end{frontmatter}

\section{Introduction}\label{sec:intro}
\subsection{Image articulation manifolds}
Many problems in image processing and computer vision involve image ensembles that are generated by varying a small set of imaging parameters such as pose, lighting, view angle, etc.\ of a fixed
three-dimensional (3D) scene.  As the parameters vary, the images can be modeled as a lying on a (typically nonlinear) manifold called an \emph{image articulation manifold (IAM)} \cite{DonohoGrimes, Wakin, coifman1, culpepper1}.  Each point on an IAM is  an image at a particular parameter value.
Over the past decade, there has been significant work \cite{coifman1,ISOMAP, LLE, LLLE} in learning and processing the underlying geometric structures associated with image ensembles.
For instance, tasks such as recognition, classification, and image
synthesis can be interpreted as navigation along a particular IAM. 

More specifically, we define an image articulation manifold (IAM) as the set of images $M$ formed by the action of an imaging map $i$ on a space of articulations $\Theta$, i.e, $M = \{i_\theta = i(\theta): \: \theta \in \Theta \}$.
This imaging process can be decomposed into two steps: first the action of the
articulation on a 3D object or scene and then the subsequent imaging of the
articulated object/scene (see \cite{3DBook} for a detailed discussion of image
formation).\footnote{Before we proceed further, it is worth discussing certain
degeneracies in the imaging process that we wish to avoid in this paper for 
analytical reasons. In particular, we want to avoid cases
where the set $M$ is not expressive of the full range of articulations. 
As an example, consider a uniformly colored 3D sphere $\mathcal{O}$ undergoing rotation about a fixed axis. Here, the parameter space $\Theta$ is the unit circle $S^{1}$. 
Being uniformly colored, the sphere does not change appearance under rotation, and the 
IAM degenerates to a single point $\{I_{\mathcal{O}}\}$. 
Were the sphere richly textured, we would obtain new views of the sphere for each rotation
so that the IAM is isomorphic to $S^{1}$. 
For the remainder of the paper, we will consider IAMs without degeneracies by assuming that the imaging map $i$ is a re-parametrization of $\Theta$; i.e., we will assume that the IAM is homeomorphic to the corresponding parameter space.
}

\subsection{IAM Challenges} \label{sec:short}
In spite of much progress, there are fundamental challenges to successfully applying manifold-processing tools to generic image data, in particular IAMs.

First, it has been shown that  IAMs containing images with sharp edges are \emph{non-differentiable} \cite{DonohoGrimes}.
Specifically, given parameters $\theta_{1}$ and $\theta_{2}$ with corresponding images $I_{1}$ and $I_{2}$, it follows that the $L^{2}$ distance $\|I_{1}-I_{2}\|_{L^{2}}$ between images $I_{1}$ and $I_{2}$ is a nonlinear function $\eta(\|\theta_{1}-\theta_{2}\|)$ that is asymptotically equivalent to $({\|\theta_{1}-\theta_{2}\|})^{\frac{1}{2}}$. Indeed, $\frac{\|I_{1}-I_{2}\|_{L^{2}}}{\|\theta_{1}-\theta_{2}\|}\geq c\|\theta_{1}-\theta_{2}\|^{-\frac{1}{2}}$, and this non-Lipschitz relation indicates that the corresponding IAM is non-differentiable.
Non-differentiability suggests that local linear approximations, such as those suggested
by differential geometry, are invariably inaccurate on IAMs. 
To illustrate this, consider a stylized example of image interpolation (see Fig.\ \ref{fig:OFM_Transport_Linear1}). 
Given images $I_{1},I_{2}\in M$, consider the affine path $\alpha(t)=tI_{1}+(1-t)I_{2}$ with $t\in [0,1]$; for a smooth 
manifold $( > {\cal{C}}^2)$, this line would be a close approximation to the actual manifold, especially over small neighborhoods.
However, non-differentiability implies that a first-order approximation to the manifold is inaccurate even over a small neighborhood. This is illustrated in Fig.\ \ref{fig:OFM_Transport_Linear1}.
To alleviate the non-differentiability problem, Wakin et al.~\cite{Wakin} have proposed a multiscale smoothing procedure that regularizes each point of the IAM by 
a set of multiscale Gaussian smoothing filters that render the IAM smooth across the various scales. This smoothing procedure then enables the definition of linear tangent spaces on which one can perform standard linear methods of analysis.
However, this is unsatisfactory, since Gaussian smoothing is inherently lossy --- leading to loss of high-frequency information in the images. 
Further,  defining tangent vectors as the limit of a multiscale procedure is inherently complex and moreover not possible
for practical scenarios where we have only samples from the IAM.

\begin{figure}[t]
\centering
\includegraphics[width=0.8\textwidth]{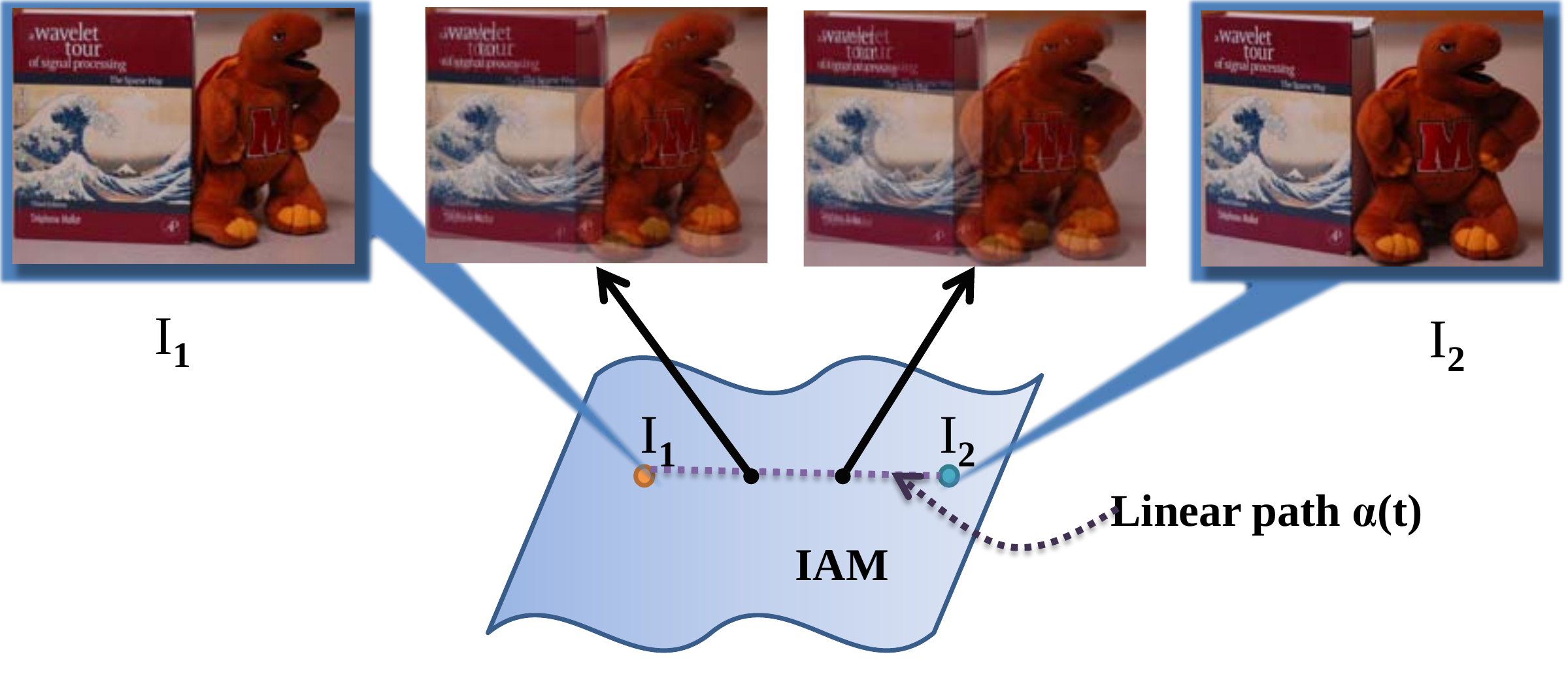}
\caption{Non-differentiability of IAMs renders  (locally) linear transport inaccurate. Consider an
image interpolation task. Given images $I_{1}$ and $I_{2}$, the line connecting them $\alpha(t) = tI_1+(1-t)I_2$ is blurred and thus a poor approximation
to the geodesic connecting $I_1$ and $I_2$.}
\label{fig:OFM_Transport_Linear1}
\end{figure}

\sloppy
Second, conventional manifold models lack a meaningful metric between points on the IAM, especially when the sampling of the manifold is sparse.
Consider the simple {\em translation manifold} $M_{T}$ generated by imaging a black disk of radius $R$ translating on an infinite white background (see Fig.\ \ref{fig:TranslatingDisk}). Let $I_{1}$ be the image of the disk with center $c_{1}$ and $I_{2}$ the image of the disk with center $c_{2}$. It then follows that 
\begin{equation*}
\|I_{1}-I_{2}\|_{L^{2}}= \eta(\min(2R, \| c_1-c_2\|)) = \begin{cases}\eta(\|c_{1}-c_{2}\|), & \|c_{1}-c_{2}\| < 2R \\ \eta(2R), & \|c_{1}-c_{2}\| \geq 2R\end{cases}
\end{equation*} 
where $\eta(\cdot)$ is a nonlinear function. Thus, when the disk centers are separated by a distance greater than the diameter of the disk, the metric is completely uninformative (see Fig.\ \ref{fig:TranslatingDisk}).
This suggests that, unless the sampling of images from the IAM is sufficiently dense, organization of the images using a construct such as a $k$-nearest neighbor graph is meaningless. $k$-nearest neighbor graphs are at the heart of traditional manifold learning techniques such as LLE \cite{LLE}, ISOMAP \cite{ISOMAP}, and diffusion maps \cite{coifman1}; as a consequence, such techniques are doomed to fail unless the sampling is sufficiently dense.

\begin{figure}[t]
\centering
\includegraphics[width=\textwidth]{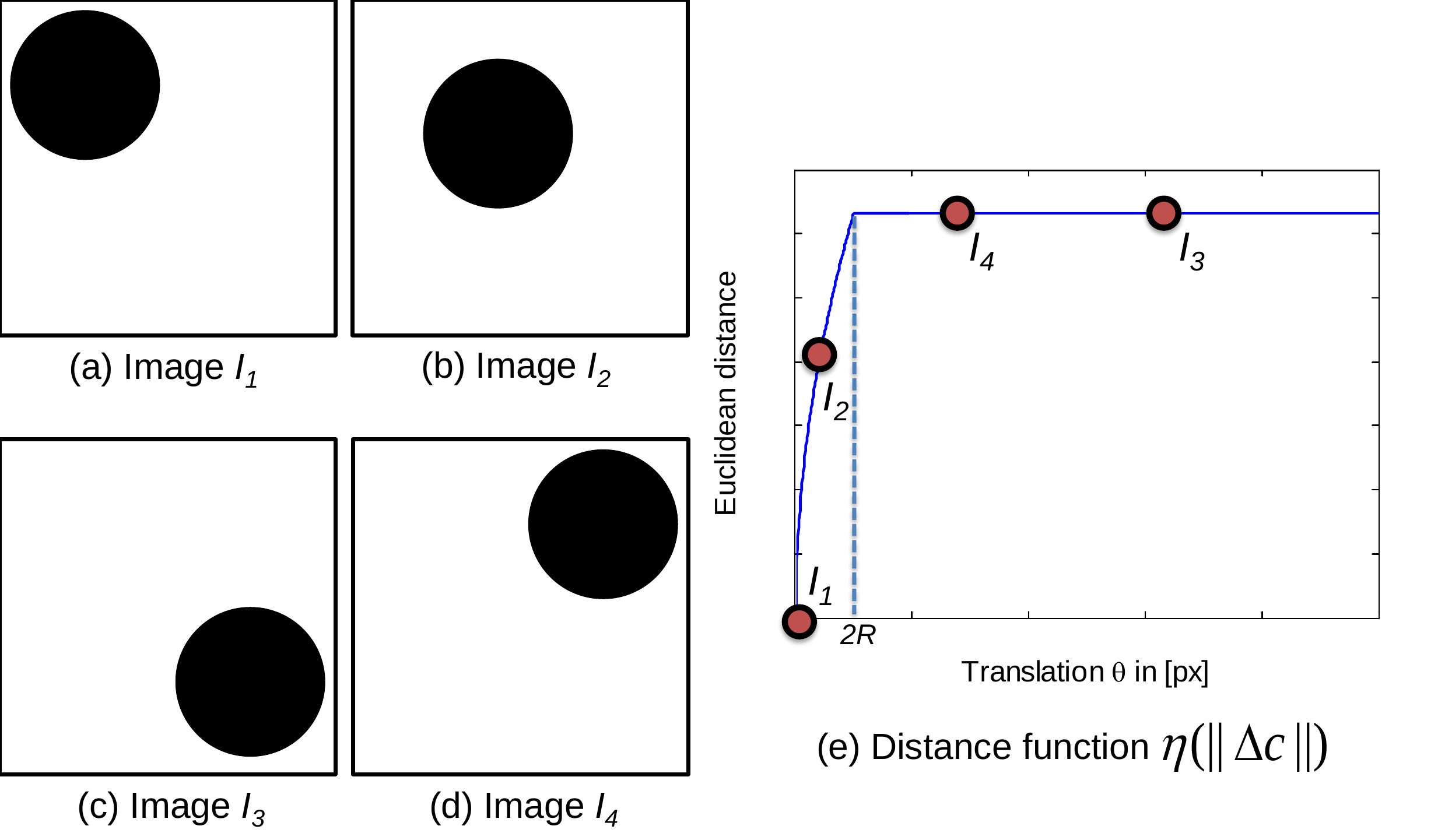}
\caption{Euclidean distance between images on an IAM can be meaningless. (a-d) Images of a translating disk over a white background. (e) The $L^2$ distance between two images depends on the amount of overlap between the the disks. However, the overlap is zero when the distance between centers exceed $2R$; in the figure above, $d(I_1, I_3) = d(I_1, I_4)$. 
Hence, the distance between two images can be written as $\eta( \min(2R, \| c_1 - c_2 \|_2 ))$ where $\eta(\cdot)$ is a monotonically increasing function. Conventional manifold learning
techniques, which rely heavily on meaningful local distances, do not work unless the sampling on the manifold is dense.}
\label{fig:TranslatingDisk}
\end{figure}


These shortcomings are exacerbated for images with rich textures;
this 
rules out a consistent analysis of IAMs based on classical differential geometry that relies on the smoothness and metric properties of the manifold.
A critical missing link in the analysis of IAMs is a systematic theory that successfully handles the above issues and permits new mathematical tools for IAMs, including analogues of traditional notions such as curvature, vector fields, parallel transport, etc.\ \cite{docarmo}. 
Clearly, once such analytic tools are available, we can greatly expand the scope of applications and pave the way for efficient algorithms specific to IAMs.

\subsection{Transport operators}
Recently, a new class of methods for handling image ensembles has been developed based on the idea of transport operators \cite{culpepper1, rao1, grimson1, tuzel1, CVPROFM}.
A transport operator on an IAM is a (typically nonlinear) map from the manifold into itself that enables one to move between different points on the manifold. 
Given images $I_{1}(\textbf{x})$ and $I_{2}(\textbf{x})$, (where $I(\textbf{x})$ denotes the intensity at the spatial location $\textbf{x}=(x,y)\in [0,1]\times[0,1]$) a {\em transport operator} $T$ is a mapping that acts in the following fashion: 
\begin{equation}
I_{2}(\textbf{x})=I_{1}(\textbf{x}+T(\textbf{x})).
\label{eqn:transport}
\end{equation}
Instead of relying on the linear tangent space that accounts only for infinitesimal transformations, nonlinear transport operators  can be well-defined over larger regions on the IAM.


For certain classes of articulations, the associated transport operators have algebraic structure in the form of a Lie group \cite{grimson1,rao1,tuzel1,culpepper1}. 
In such instances, it is possible to explicitly compute transport operators that capture the curved geometric structure of an IAM. An example of this is the case of {\em affine} articulations, where the transport takes the form
\begin{equation*}
I({\bfx}) = I_0 (A\bfx+\bft).
\end{equation*}
In this case, the transport operator $T(\cdot)$ is of the form $T(\bfx) = (A - {\mathbb I})\bfx+\bft$; this can be modeled as the group of 2D affine transformations.
The affine group has found extensive use in tracking \cite{tuzel1} and registration \cite{li2010aligning}.
Miao and Rao~\cite{rao1} learn affine transport operators for image ensembles using a matrix exponential-based generative model and demonstrate improved performance over locally linear approximations.
Culpepper and Olshausen~\cite{culpepper1} extend this framework using a more complex model on the transport operator in order to model paths on image manifolds. 

Other common examples of articulations in computer vision are  the projective group (used to model homographies and projective transformations) and diffeomorphisms (used to model 1D warping functions, density functions)~\cite{miller1}. However, while algebraic transport methods are mathematically elegant, they are applicable only to a very restrictive class of IAMs. Many IAMs of interest in computer vision and image processing applications, including IAMs corresponding to 3D pose articulations and non-rigid deformations, possess no explicit algebraic structure.

\subsection{Optical flow-based transport}
In this paper, we study a specific class of transport operators that are generated by the {\em optical flow} between images (we introduced this notion empirically in \cite{CVPROFM}). 
Given two images $I_{1}$ and $I_{2}$, the optical flow between them is defined to be the tuple $(\textbf{v}_{x},\textbf{v}_{y})\in L^{2}([0,1]^{2})\times L^{2}([0,1]^{2})$ such that 
\begin{equation}
I_{2}(x,y)=I_{1}(x+\textbf{v}_{x}(x,y),y+\textbf{v}_{y}(x,y)).
\label{eq:oflow}
\end{equation}
A common assumption in computing the optical flow between images is {\em brightness constancy} \cite{HornSchunk}, where the spatial intensity is assumed to not change between $I_{1}$ and $I_{2}$. 
Since the pioneering work of Horn and Schunk \cite{HornSchunk}, there has been significant progress towards the robust estimation of optical flow between image pairs \cite{OpticalFlow2,OpticalFlow3}. 

Optical flow is a natural and powerful {transport operator} to transform one image into another. 
In the context of image manifolds, the collection of {all} optical flow operators at a point on an IAM is a manifold of the same dimension as the IAM \cite{CVPROFM}. In other words, at any reference image on an IAM, there is a corresponding manifold of optical flow operators that  transports the reference image along the IAM. 
This new operator manifold, which we christen the {\em optical flow manifold} (OFM), can be used to obtain a canonical chart for the IAM \cite{CVPROFM}; this  enables significantly improved navigation capabilities on the IAM than previous methods.
In particular, OFM-based transport is well-defined even in instances when the transport operators cannot be modeled as a Lie group; an example of this is the pose manifold (the IAM associated with rigid body motion).

To see the efficacy of optical flow-based transport, consider again the case of image interpolation,
 but now with optical flow as the transport operator (see Fig.\ \ref{fig:OFM_Transport_OF}). Here, the path $\gamma(t), t\in [0,1]$ generated on the IAM via optical flow is a better representative of a path on the IAM, i.e., $\gamma(t)\in M$ for all $t$. Moreover, if the IAM is generated via Lie group actions, then this path coincides with the geodesic.

\begin{figure}[t]
\centering
\includegraphics[width=\textwidth]{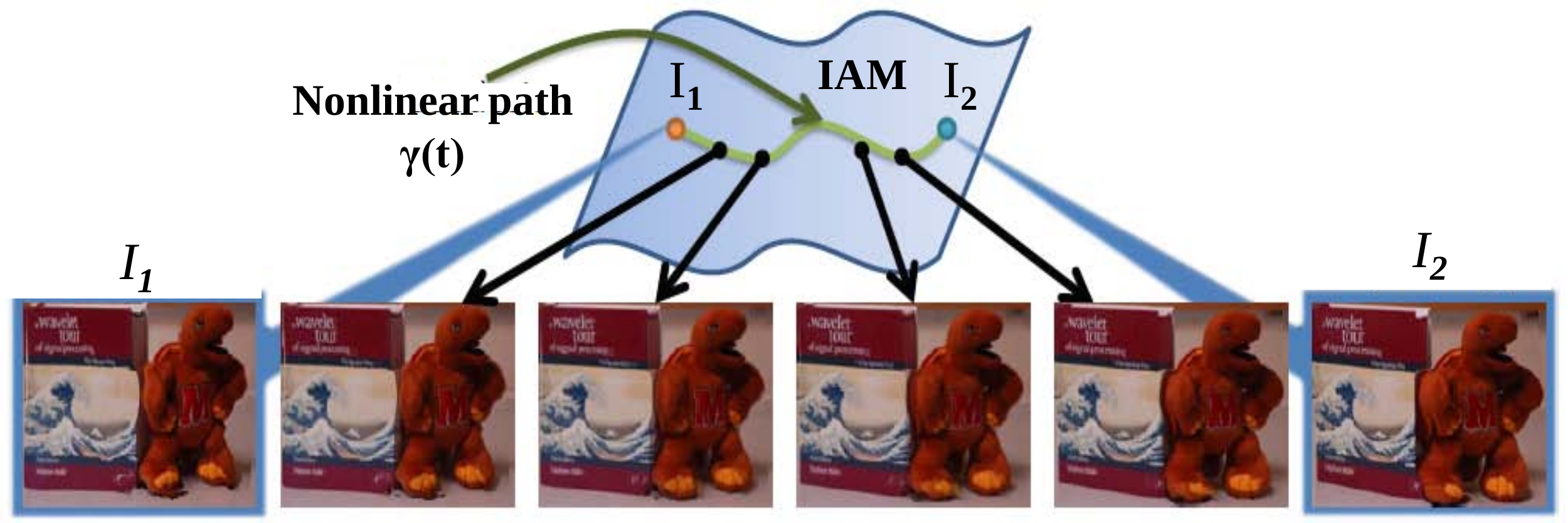}
\caption{Optical flow based transport on an IAM leads to accurate image interpolation. Consider again the interpolation task in Fig.\ \ref{fig:OFM_Transport_Linear1}. While locally linear transport on IAMs lead to inaccurate interpolation results,  the path $\gamma(t)$ generated using optical flow provides an accurate approximation to the true path between $I_{1}$ and $I_{2}$. Moreover, if the IAM is generated via Lie group actions, then the path generated coincides with the geodesic.}
\label{fig:OFM_Transport_OF}
\end{figure}


As discussed earlier, IAMs composed of images with sharp edges and textures lack smoothness and hence do not support locally linear modeling.
In contrast, for a large class of interesting articulations, including
affine transformations and 3D pose, the corresponding OFMs are smooth (see \ref{sec:smooth} and \cite{CVPROFM}) and support local linear modeling.  
Moreover, we can define the distance between two optical flows to measure the amount of motion required to articulate from one image to another. 
These properties suggest that machine learning techniques (such as LLE, ISOMAP, etc.) should be able to extract a considerable amount of geometrical information about an image ensemble when applied to its OFMs.
Figure \ref{fig:OFMvsIAMISOMAP1} confirms this fact by showcasing the improvement in OFM vs.\ IAM-based dimensionality reduction for an object under 2D translation.

The stability of OFM-based modeling enables us below to develop new differential geometric tools for signal processing such as the geometric mean and Karcher mean \cite{karcher1977riemannian} of an image ensemble. The Karcher mean of a set of images is defined as the image on the IAM that is closest to the set in the sense of \emph{geodesic distance}. Figure \ref{fig:piggy} shows the results of Karcher mean estimation using the OFM. 
The interested reader is referred to \cite{CVPROFM} for details of the Karcher mean estimation as well as more OFM-based tools and applications.

\subsection{Related work}
OFMs fall under the category of deformation modeling, which has been studied in many different contexts, including active shape \cite{cootes1995active} and active appearance models \cite{cootes2001active}.
However, there are significant differences between OFMs and traditional deformation modeling approaches.
Morphlets \cite{kaplan2006morphlet}, for examlpe, provide a mutliscale modeling of and interpolation across image deformations, but their treatment is limited to image pairs. In contrast, OFMs apply to image ensembles consisting of a potentially large number of images. 
Beymer and Poggio \cite{beymer1996image} have argued for the use of motion-based representations for learning problems. However, their goal is  image synthesis, and therefore they offer no insights into
the geometric nature of manifold-valued data.
Jojic et al.\ \cite{jojic2001learning} use a layered representation to represent videos by separating the appearance of moving objects from their motion and then representing each using subspace and manifold models, respectively. This simple, yet powerful, representation can model and synthesize complex scenes using simple primitives, but it is not intended to go beyond simple manifolds such as those generated by translations and affine transformations.

\begin{figure}[ttt]
\centering
\includegraphics[width=\textwidth]{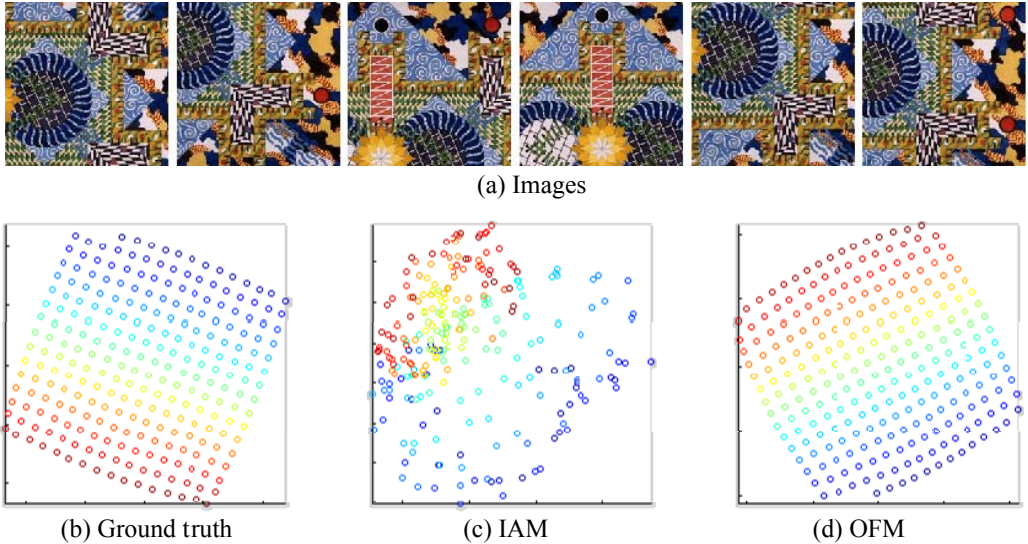}
\caption{Comparison of manifold learning on a IAM vs.\ OFM (from \cite{CVPROFM}). 
The IAM is generated by cropping patches of size $200 \times 200$ pixels at random from an image --- thereby generating a 2D translation manifold.
(a) Sample images from the IAM showing a few images at various translations. 
(b-d)  2D embedding obtained using ISOMAP on (b) the ground truth, (c) the  IAM, and (d) the OFM.
 The near perfect embedding obtained using OFM hints at the near-perfect isometry in the  OFM distances.}
\label{fig:OFMvsIAMISOMAP1}
\end{figure}

\begin{figure}[ttt]
\centering
\includegraphics[width=0.80\textwidth]{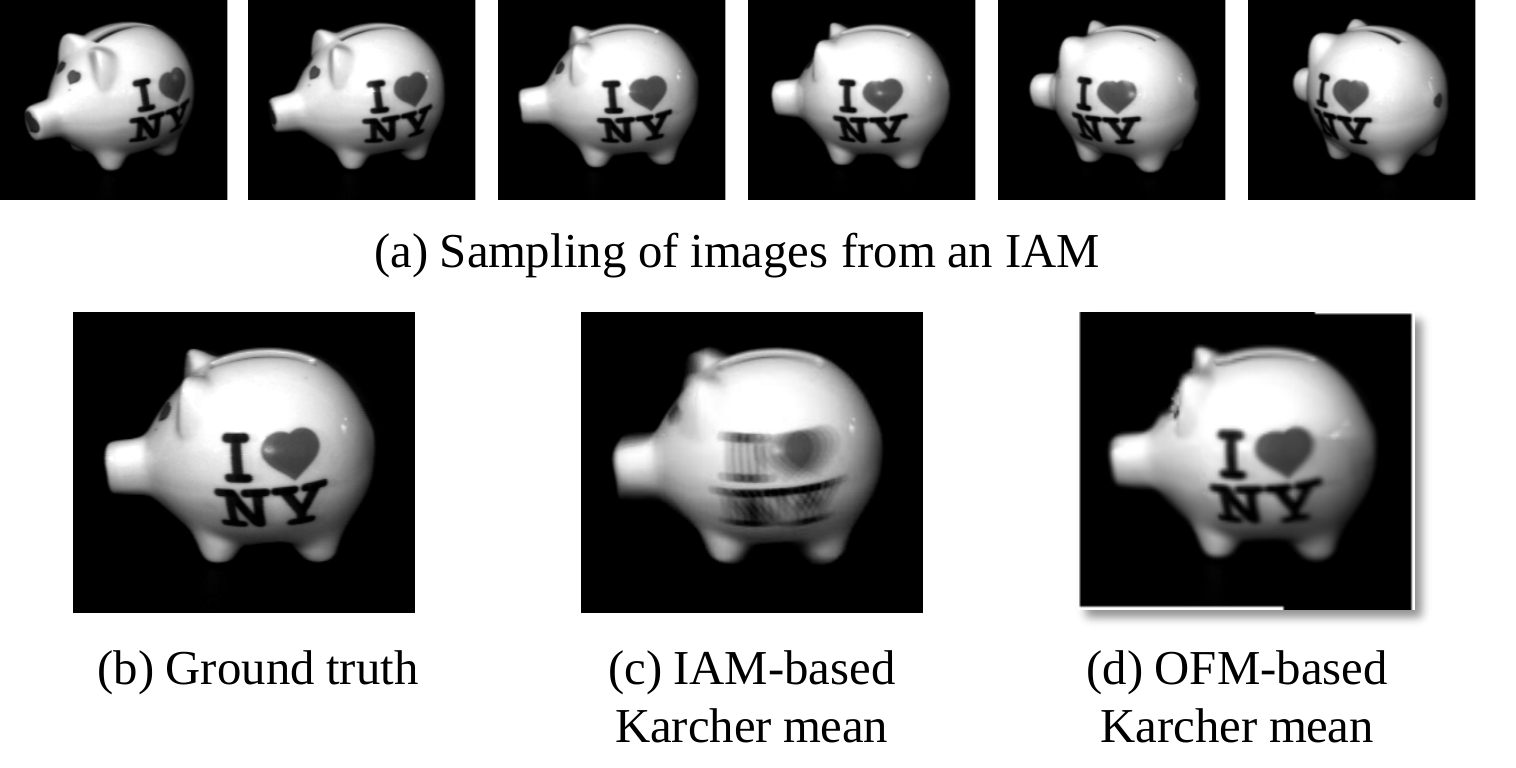}
\caption{Karcher mean estimation for 20 images 
generated by the rotation of an object about a fixed axis \cite{CVPROFM}.
The images are from the COIL dataset \cite{nene1996columbia}.
(a) Sample images from the IAM showing a few images at various rotations. 
We estimate the Karcher mean using local linear transport on the IAM and the OFM.
Shown above are  (b) the  ground truth Karcher mean of the images, (c) the Karcher mean estimated using local linear transport on the IAM, and
(d) the Karcher mean estimated using local linear transport on the OFM.
The accuracy of the estimate obtained from the OFM showcases the validity of local linear transport on the OFM.
}
\label{fig:piggy}
\end{figure}


\subsection{Specific contributions}
The main aim of this paper is to develop the mathematical foundation of optical flow-based transport operators for image manifolds, which were introduced empirically \cite{CVPROFM}.

We first define a metric on an IAM using its corresponding OFMs.
Each OFM has a natural metric that  is a locally isometric function of the corresponding parameter values. 
We consider the induced metric on the IAM, which we dub the \emph{flow metric} and show that the flow metric between two points of the IAM is a measure of the distance between the corresponding parameter values.
Next, using the flow metric, we develop analytic notions of curvature, optical flow fields, and parallel transport. We analyze in detail the case of optical flow fields defined along a fixed curve. In particular, we define a unique function associated to each such flow field, which we dub the \emph{motion function}, using which we can define the notion of \emph{parallel} flow fields. We answer the natural question of how one can optimally approximate a non-parallel field by a parallel field, and thereby induce uniform motion along the curve. We also construct a monoid structure on the set of all flow fields along a fixed curve. Under certain conditions on the curvature of a curve, we show that the space of parallel optical flow fields along the curve is a submonoid that comes with a convenient multi-scale structure.

We envision that the theory developed in this paper will enable a large class of practical applications involving image manifolds especially under sparse sampling of images from the manifold. In this context, we believe that the long term impact of this paper is the first step towards a complete theory of manifolds for arbitrary classes of signals using transport operators along the lines of the classic differential geometry for smooth manifolds.

\subsection{Organization}
The remainder of the paper is organized as follows. In Section \ref{sec:theory}, we introduce OFMs and, using a fixed metric on the OFMs, study the induced flow metric on the corresponding IAM. We compare dimensionality reduction techniques on the IAM vs.\ OFMs. Using the flow metric, we develop geometric tools on the IAM in Section \ref{sec:difftools} and highlight their application to parameter estimation. We also develop error bounds on approximating non-parallel flow fields by parallel flow fields and illustrate the idea with the example of video resampling. In Section \ref{sec:multiscale} we develop the multiscale structure of parallel flow fields. We conclude in Section \ref{sec:discuss} with a brief discussion.

\section{OFMs and the Flow Metric}\label{sec:theory}
In this section, we define and study the basic properties of OFMs corresponding to an IAM $M$.
Much of the section is concerned with a formal introduction to OFMs, first defined in \cite{CVPROFM}, leading towards the development of our fundamental tool, the flow metric on an IAM.

\subsection{Optical flow manifolds}
The optical flow between two images on an IAM measures the apparent motion between the two images and thus reflects the corresponding parameter change between the two images. For a fixed base image $m\in M$, consider a neighborhood $N(m)$ of $m$. If for $m'\in N(m)$ there exist flow vectors $(\textbf{v}_{x},\textbf{v}_{y})$ such that $m'$ can be obtained from $m$ using the flow vector, i.e., $m'(x,y)=m(x+\textbf{v}_{x},y+\textbf{v}_{y})$, then we say that optical flow exists from $m$ to $m'$. We denote this situation as $\phi_{\textbf{v}_{x},\textbf{v}_{y}}(m)=m'$ or simply $\phi(m)=m'$. In practice, occlusion or boundary effects (i.e., veiling of certain portions of an image due to changes in the background or interferers in the scene) may lead to undefined estimates for the flow vectors. However, as described in \cite{CVPROFM}, one can mitigate these issues by incorporating additional consistency tests to ensure that only the meaningful flow vectors are retained.


The set of all points $m'\in M$ for which optical flow from $m\in M$ to $m'$ exists is a neighborhood of $m$, which we denote by \begin{equation*}B_{m}=\{m'\in M: m'(x,y)=m(x+\textbf{v}_{x},y+\textbf{v}_{y})\}.\end{equation*}Using this neighborhood, we define the optical flow manifold $O_{m}$ at $m$ as follows.

\begin{definition} Let $M$ be a $K$-dimensional IAM. Given $m \in M$, the \emph{Optical Flow Manifold} $O_{m}$ is defined as the set of optical flows $\phi$ between $m$ and points in $B_{m}$ \begin{equation*}O_{m} = \{\phi=(\textbf{v}_{x},\textbf{v}_{y}):\phi(m)\in B_{m}\}.\end{equation*}
The neighborhood $B_{m}$ is called the \emph{flow neighborhood} around $m$. \end{definition}

$O_{m}$ is nonlinear, i.e., an arbitrary OFM is not always a linear vector space. It is clear that the collection of neighborhoods $\{B_{m}:m\in M\}$ covers $M$.

\begin{proposition}
Let $M$ be a $K$-dimensional IAM. Then $O_{m}$ has the structure of a manifold homeomorphic to $B_{m}.$
\end{proposition}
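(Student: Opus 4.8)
The plan is to exhibit a concrete homeomorphism between $O_{m}$ and $B_{m}$ and then import the manifold structure across it. The natural candidate is the \emph{evaluation} (or warping-action) map $E : O_{m} \to B_{m}$ defined by $E(\phi) = \phi(m)$, i.e., $E(\mathbf{v}_{x},\mathbf{v}_{y})(x,y) = m(x+\mathbf{v}_{x},y+\mathbf{v}_{y})$. I would topologize $O_{m}$ with the subspace topology it inherits as a subset of $L^{2}([0,1]^{2})\times L^{2}([0,1]^{2})$, and $B_{m}$ with the subspace topology inherited from $M$. Since $B_{m}$ is by construction a neighborhood of $m$, it is an open subset of the $K$-dimensional manifold $M$ and hence is itself a $K$-dimensional manifold; therefore, once $E$ is shown to be a homeomorphism, pulling back a chart $(U,\psi)$ of $B_{m}$ to $(E^{-1}(U),\psi\circ E)$ equips $O_{m}$ with a compatible atlas, and a space homeomorphic to a topological manifold is a topological manifold. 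This reduces the entire proposition to verifying that $E$ is a bijection with continuous inverse.

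For bijectivity, surjectivity is immediate: by the very definition of the flow neighborhood $B_{m}$, every $m' \in B_{m}$ admits at least one flow $\phi$ with $\phi(m)=m'$, so $E$ hits all of $B_{m}$. Injectivity is the crux and is where the standing hypotheses must be invoked. In general optical flow is not unique — the aperture problem allows many flow fields to carry $m$ to the same $m'$ — so I would lean on the non-degeneracy assumption introduced earlier, namely that the imaging map is a reparametrization and $M$ is homeomorphic to its parameter space $\Theta$, together with the brightness-constancy model \eqref{eq:oflow} and the consistency tests recalled just above the definition. Under these assumptions the flow realizing a given target $m'$ is the canonical (meaningful) one and is unique, so $E^{-1}$ is single-valued and $E$ is a bijection onto $B_{m}$.

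The remaining task is bicontinuity. Continuity of $E$ reduces to $L^{2}$-continuity of the substitution operator $\phi \mapsto m(\cdot + \phi)$ on flow fields; this follows from the continuity of composition by $L^{2}$-close displacements for the (suitably regular, after the smoothing/consistency reduction) base image $m$. Continuity of $E^{-1}$ is precisely the statement that the optical flow from the fixed base $m$ to a target $m'$ varies continuously with $m'$, i.e.\ the stability of flow estimation; this is again where the non-degenerate reparametrization hypothesis does the work, since it rules out the jumps and multiplicities that would otherwise break continuity. The main obstacle throughout is thus the pair of issues tied to the non-uniqueness and instability of optical flow — establishing uniqueness (injectivity) and continuity of the inverse — and I expect the argument to hinge entirely on making the earlier non-degeneracy and consistency assumptions precise enough to guarantee a single flow depending continuously on the target image.
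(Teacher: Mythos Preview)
Your approach diverges from the paper's in one decisive respect: the topology you place on $O_{m}$. You give $O_{m}$ the subspace topology inherited from $L^{2}([0,1]^{2})\times L^{2}([0,1]^{2})$ and then try to verify that the evaluation map $E$ is a homeomorphism with respect to this \emph{a priori} topology. That forces you into genuine analysis --- continuity of the substitution operator $\phi\mapsto m(\cdot+\phi)$ and, more seriously, continuous dependence of the recovered flow on the target image --- and you correctly flag these as the main obstacles without fully resolving them.

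The paper sidesteps all of this by simply \emph{defining} the topology on $O_{m}$ to be the one transported from $B_{m}$. It takes the bijection $g:B_{m}\to O_{m}$ (essentially your $E^{-1}$), declares it bijective, and then endows $O_{m}$ with the quotient topology induced by $g$; since $g$ is a bijection this is just the pushed-forward topology, and $g$ is a homeomorphism by construction. A chart on $B_{m}$ is then pulled back to give a chart on $O_{m}$. The only substantive step is the assertion that $g$ is bijective, which the paper states without argument (``$g$ is clearly bijective''), implicitly relying on the same uniqueness-of-flow convention you invoked.

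What each route buys: the paper's argument is short and delivers exactly the stated proposition with no analytic input, but the topology on $O_{m}$ is defined circularly and carries no independent meaning beyond ``whatever makes $g$ a homeomorphism.'' Your route, if it could be completed, would show that the natural $L^{2}$ topology on flow fields coincides with the manifold topology --- a stronger and more informative statement --- but it requires stability estimates for optical flow that are not available at this level of generality and that the paper never provides. For the purposes of this proposition, the transported-topology shortcut is all that is needed.
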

\begin{proof}
We first show that $O_{m}$ is homeomorphic to $B_{m}$ and pullback the smooth structure of $B_{m}$ to induce a smooth structure on $O_{m}$. Consider the map $g: B_{m}\rightarrow O_{m}$ that sends a point $m'\in B_{m}$ to $\phi \in O_{m}$ such that $\phi(m')=m$. $g$ is clearly bijective and hence, we use $g$ to endow $O_{m}$ with the quotient topology. Moreover, since $g$ is injective, $g$ is a homeomorphism since $g$ is an open map: given an open set $U\subset B_{m}$ we have $U = g^{-1}(g(U))$ and by the quotient topology on $O_{m}$, we have that $V = g(U)$ is open in $O_{m}$ since $g^{-1}(V)=U$ is open in $B_{m}$. Since $\{B_{m}: m\in M\}$ is a chart for the manifold structure of $M$, there exists continuous maps $\{\Phi_{m}: m\in M\}$ so that $\{(B_{m},\Phi_{m}):m\in M\}$ is an atlas for $M$. Therefore, the composition $\Phi_{m}\circ g^{-1}$ is a homeomorphism from $O_{m}$ to an open set $W$ in $\mathbb{R}^{K}$. Since $O_{m}$ is now covered by the single chart $\{O_{m},\Phi_{m}\circ g^{-1}\}$, we can pullback this smooth to $O_{m}$ via $\Phi_{m}\circ g^{-1}$ and endow $O_{m}$ with a smooth structure. Moreover, $B_{m}$ is homeomorphic to $O_{m}$ by construction.
\end{proof}

We make a few preliminary observations. First, by the homeomorphic relationship between $O_{m}$ and $B_{m}$, we conclude that $O_{m}$ is also a $K$-dimensional manifold. Also, $O_{m_{1}}$ is homeomorphic to $O_{m_{2}}$ for any $m_{1}, m_{2}\in M$. Moreover, the trivial element $\phi_{0}\in O_{m}$ for each $m\in M$ that maps $m$ to itself in $B_{m}$ acts a natural ``origin" in $O_{m}$.

We observe, again from the homeomorphic relationship between $B_{m}$ and $O_{m}$, that given any point $m_{1}\in B_{m}$ there is a unique operator $\phi_{1}$ such that $\phi_{1}(m)=m_{1}$. A question to ask is whether there is any relationship between $O_{m}$ and the tangent space $T_{m}$ at $m$. Indeed, for a smooth IAM, one can define tangent spaces $T_{m}$ at $m\in M$ and the OFM $O_{m}$ at $m$ is diffeomorphic to a neighborhood of $0\in T_{m}$. To see this fact, recall that the \emph{exponential map} defined on $T_{m}$ is a diffeomorphism between a neighborhood $U_{0}$ of $0\in T_{m}$ and a neighborhood of $V_{m}$ of $m$. We also have a diffeomorphism $\Phi$ from $O_{m}$ to $B_{m}$ by definition. Now, $B_{m}$ is open and there is an open ball $B_{0} \subseteqq U_{0}$ on which $\Phi^{-1} \circ \exp$ is a diffeomorphism, being a composition of two diffeomorphisms.

As a concrete example, consider again the translational manifold $M_{T}$ from Section \ref{sec:short}. Note that the parameter space in this case is $\Theta=\mathbb{R}^{2}$. Since there is no occlusion between any two images on $M_{T}$, it follows that given any $m,m'\in M_{T}$, there exists a $\phi\in O_{m}$ such that $\phi(m)=m'$. Since this is true for any pair of images in $M_{T}$, we conclude that $O_{m}=\mathbb{R}^{2}$ $\forall m\in M_{T}$ and hence, $B_{m}=M_{T}$ $\forall m\in M_{T}$. More generally, we note that for an IAM $M$ generated by Lie group actions without occlusion between images, the OFM $O_{m}$ at any point $m\in M$ can be identified with the parameter space $\Theta$ and neighborhood $B_{m}$ at each point is the entire manifold $M$. In particular, one can recover the geodesic path between two any two points $m,m' \in M$ by using appropriate flow operators in $O_{m}$ to generate the geodesic path from $m$ to $m'$. This shortest path corresponds to the geodesic in the parameter space between the parameter values corresponding to $m,m'$ as well.  We will return to this example in future sections and show how our more general formulation contains the algebraic methods such as \cite{grimson1,rao1,tuzel1,culpepper1} as special cases.

As in the case of the tangent bundle, we can construct an analogous bundle with the collection of $O_{m}$ since $m$ varies through $M$.

\begin{definition} Let $M$ be an IAM and $O_{m}$ the OFM at $m\in M$. The \emph{flow bundle} $OM$ on $M$ to be the disjoint union of the $O_{m}$ since $m$ varies over $M$ \begin{equation*} 
OM = \coprod_{m\in M} O_{m}.
\end{equation*}
\end{definition}

Thus, an element of $OM$ is a pair $(m,\phi)$ with $\phi\in O_{m}$. Using this fact, we can induce a topology on the flow bundle.

\begin{proposition}Given a $K$-dimensional IAM $M$, the flow bundle $OM$ is a $2K$-dimensional manifold.\end{proposition}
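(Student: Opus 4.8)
The plan is to mimic the standard construction of a smooth structure on the tangent bundle, transporting it across the fiberwise homeomorphism $g$ established in the previous proposition. Since $OM = \coprod_{m\in M} O_m$ is a disjoint union of the fibers $O_m$, I first need to endow it with a topology, then cover it by charts into $\reals^{2K}$, and finally check that the transition maps between overlapping charts are smooth (or at least continuous, depending on the category the paper is working in). The key structural input is that each $O_m$ is homeomorphic to $B_m \subseteq M$ via the map $g$ from the earlier proof, and that $M$ itself is a $K$-dimensional manifold with atlas $\{(B_m,\Phi_m)\}$.

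The concrete steps I would carry out are as follows. First, fix an atlas $\{(B_m,\Phi_m)\}$ for $M$, where each $\Phi_m$ maps $B_m$ homeomorphically onto an open set $W_m \subseteq \reals^K$. For the fiber coordinates, recall that $O_m$ is homeomorphic to $B_m$, so a point $\phi \in O_m$ corresponds to a unique $m' = \phi(m) \in B_m$, and we may coordinatize $\phi$ by $\Phi_m(m') \in W_m \subseteq \reals^K$. This suggests defining, for each base chart $B_m$, a bundle chart on $\pi^{-1}(B_m)$ (where $\pi: OM \to M$ is the natural projection $(m',\phi)\mapsto m'$) by sending $(m',\phi)$ to the pair $(\Phi_m(m'), \Psi(\phi)) \in W_m \times \reals^K \subseteq \reals^{2K}$, where $\Psi$ records the flow $\phi$ in the coordinates induced on the fiber. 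I would use these maps to define the topology on $OM$ (declaring these bundle charts to be homeomorphisms onto open subsets of $\reals^{2K}$), exactly as one does for the tangent bundle, and then verify this is a well-defined topology by checking the charts agree on overlaps.

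The main obstacle, and the step deserving the most care, is verifying that the transition maps between two overlapping bundle charts are continuous (smooth). On the base, the transitions are just $\Phi_{m_2}\circ \Phi_{m_1}^{-1}$, which are smooth because $M$ is a manifold. The delicate part is the fiber transition: I must show that the change of flow coordinates, as the base point varies, depends smoothly on both the base coordinate and the fiber coordinate. Here I would lean on the earlier observation that $O_m$ is diffeomorphic (via $\Phi^{-1}\circ\exp$) to a neighborhood of $0$ in the tangent space $T_m$ for a smooth IAM; this identifies the fiber transitions of $OM$ with those of the tangent bundle $TM$ (up to the diffeomorphism $\Phi \circ \exp$), which are known to be smooth. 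Concretely, I would argue that under the identification of $O_m$ with a neighborhood of $0 \in T_m$, the flow bundle $OM$ is diffeomorphic to an open neighborhood of the zero section in $TM$, and since $TM$ is a $2K$-dimensional manifold the same follows for $OM$. I expect the cleanest route is precisely this: construct an explicit fiber-preserving homeomorphism $OM \to (\text{open nbhd of zero section in } TM)$ and pull back the known smooth structure, so that the bulk of the transition-map verification is inherited rather than recomputed by hand.
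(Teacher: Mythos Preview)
Your chart construction---coordinatize $(m,\phi)\in OM$ by the pair $(m,\phi(m))$ and apply $M$-charts to each factor---is exactly what the paper does. The paper's proof is very short: given an atlas $\{(U_\lambda,\psi_\lambda)\}$ for $M$, it sets $\psi_{\alpha,\beta}^*(m,\phi) = (\psi_\alpha(m),\psi_\beta(\phi(m)))$, observes that this lands in an open subset of $\reals^{2K}$, writes down the inverse, and concludes $OM$ is locally Euclidean. It does not discuss transition maps at all.

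Where you diverge is in your proposed treatment of the fiber transitions via the exponential map and the tangent bundle $TM$. This detour is both unnecessary and unavailable in the paper's setting. It is unnecessary because, under the chart $(m,\phi)\mapsto(\psi_\alpha(m),\psi_\beta(\phi(m)))$, the transition between two such charts is simply a pair of $M$-chart transitions $(\psi_{\alpha'}\circ\psi_\alpha^{-1},\ \psi_{\beta'}\circ\psi_\beta^{-1})$ applied coordinatewise; whatever regularity the atlas of $M$ has is inherited directly, with no appeal to $TM$. It is unavailable because the identification of $O_m$ with a neighborhood of $0\in T_m$ via $\Phi^{-1}\circ\exp$ is made in the paper only under the hypothesis that the IAM is smooth, and a central theme of the paper is precisely that IAMs with sharp edges are \emph{not} differentiable---so $T_m$ and $\exp$ need not exist in the generality of the proposition. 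Your first route already suffices; drop the $TM$ argument.
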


\begin{proof} We first note that if $(m,\phi) \in OM$ then $\{m,\phi(m)\} \in \{m\}\times B_{m}$. Thus, given an atlas $\{(U_{\lambda},\psi_{\lambda})\}_{\lambda\in\Lambda}$ of $M$, we have that \begin{equation*}\psi_{\alpha,\beta}^{*}((m,\phi)) = (\psi_{\alpha}(m),\psi_{\beta}(\phi(m)))\end{equation*} maps $(m,\phi)$ into $\psi_{\alpha}(U_{\alpha})\times\psi_{\beta}(U_{\beta}\bigcap B_{m})$ where $U_{\alpha}$ and $U_{\beta}$ are charts around $m$ and $\phi(m)$ respectively. The (continuous) inverse of $\{x_{1},...,x_{k},y_{1},...,y_{k}\} \in \psi_{\alpha}(U_{\alpha})\times\psi_{\beta}(U_{\beta}\bigcap B_{m})$ is given by $(m,\phi)$ where $\psi_{\alpha}(m) = \{x_{1},...,x_{k}\}$ and $\phi$ is the unique operator in $O_{m}$ such that $\psi_{\beta}(\phi(m)) = \{y_{1},...,y_{k}\}$ so that $OM$ is locally Euclidean.\end{proof}

Thus, we see that OFMs are manifolds consisting of flow operators that are defined pointwise on the corresponding IAM. The action of flow operators at a base point on the IAM results in motion \emph{along} the IAM, as opposed to linear transport that results in motion \emph{off} the manifold.

The key property that makes the study of OFMs interesting is that, for interesting IAMs, the associated OFMs are smooth and exhibit nice distance properties \cite{CVPROFM}. We summarize these in \ref{sec:smooth}. These two properties, namely smoothness and isometry, are in turn used to define a meaningful distance on the IAM. We discuss this next.

\subsection{Metric structure on an IAM via its OFMs}
Consider again the translational manifold $M_{T}$, where the OFM $O_{m}$ at each point $m\in M_{T}$ can be identified with $\mathbb{R}^{2}$. Being isometric with $\mathbb{R}^{2}$, we can endow each $O_{m}$ with the Euclidean metric which we denote as $d_{O}(\cdot,\cdot)$. Let $\theta_{1}, \theta_{2} \in \mathbb{R}^{2}$ be a pair of parameters such that $m_{1}=i(\theta_{1}), m_{2}=i(\theta_{2})$; note that there exists a $\phi\in O_{m_{1}}$ such that $\phi(m_{1})=m_{2}$. It then follows that $d_{O}(\phi_{0},\phi)=C\|\theta_{1}-\theta_{2}\|$ for some $C>0$.
As the  results of \cite{CVPROFM} indicate, the above discussion holds analogously for generic OFMs, i.e., each $O_{m}$ has an associated metric $d_{O}(\cdot,\cdot)$ and this metric is locally isometric to a corresponding metric on the parameter space $\Theta$. We indicate this as \begin{equation*}d_{O}(\phi_{0},\phi) \propto d_{\Theta}(\theta_{1},\theta_{2}),\end{equation*} where $\phi_{0}$ is the unique operator in $O_{m_{1}}$ such that $m_{1}=\phi_{0}(m_{1})$.

Our main focus in the remainder of this section is to define a corresponding metric for IAMs using the metric $d_{O}(\cdot,\cdot)$ on $O_{m}$. The resulting metric on $M$ inherits the property of being locally isometric to the changes in parameters. As a first step, we locally ``push forward" the metric on $O_{m}$ onto $B_{m}$ as follows. For points $m_{1},m_{2}\in M$ with $m_{2}\in B_{m_{1}}$, we have a unique operator $\phi_{1}$ such that $m_{2}=\phi_{1}(m_{1})$ so that we can define the distance $d_{M}(m_{1},m_{2})$ as the corresponding distance between $\phi_{0}$ and $\phi_{1}$
\begin{equation*}
d_{M}(m_{1},m_{2}) := d_{O}(\phi_{0},\phi_{1}).
\end{equation*}
Moreover, if $m_{1}=f(\theta_{1})$ and $m_{2}=f(\theta_{2})$ for parameters $\theta_{1}, \theta_{2} \in \Theta$ then we have
\begin{equation*}
d_{M}(m_{1},m_{2})\propto d_{\Theta}(\theta_{1},\theta_{2}).
\end{equation*} 
However, this definition does not readily extend to the case where $m_{1}$ and $m_{2}$ are not ``optically related", i.e., $m_{2}\notin B_{m_{1}}$. In this case, we first connect $m_{1}$ and $m_{2}$ by a path $c$ such that $c(0)=m_{1}, c(1)=m_{2}$. We then partition the domain of $c$ by a partition $P=\{0=t_{0}<t_{1}<\cdots<t_{n}=1\}$ such that the intermediate points along the path are optically related i.e., $c(t_{i})\in B_{c(t_{i-1})}$, where we assume that, for a fine enough partition, we can obtain such a nesting. We then define the distance along $c$ to be
\begin{equation*}
d(c,m_{1},m_{2})=\sup_{P}\displaystyle\sum\limits_{i=0}^{n-1} d_{M}(c(t_{i}),c(t_{i+1})),
\end{equation*}
where the supremum is over all partitions of the path.
By taking the infimum over all possible paths, we obtain a metric on $M$, i.e.,
\begin{equation}
d_{M}(m_{1},m_{2}) = \inf_{c} \displaystyle\sum\limits_{i=0}^{n-1} d_{M}(c(t_{i}),c(t_{i+1})).
\label{eq:metric4}
\end{equation}
In essence, the metric $d_{M}(\cdot,\cdot)$ is similar to the Riemannian distance: we first define distance over a fixed curve and then take the infimum over all possible paths (see Fig.\ \ref{fig:piece}).

\begin{proposition}
For an IAM $M$, the distance $d_{M}(\cdot,\cdot)$ in (\ref{eq:metric4}) is a metric on $M$.
\end{proposition}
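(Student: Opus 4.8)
The plan is to verify the four metric axioms for $d_{M}$ in (\ref{eq:metric4}): non-negativity, symmetry, the triangle inequality, and the identity of indiscernibles. Non-negativity is immediate, since $d_{O}(\cdot,\cdot)$ is a metric on each $O_{m}$ and hence each summand $d_{M}(c(t_{i}),c(t_{i+1})) = d_{O}(\phi_{0},\phi_{i}) \ge 0$; sums, suprema over partitions, and the infimum over paths of non-negative quantities remain non-negative. The workhorse throughout will be the local isometry $d_{M}(m_{1},m_{2}) \propto d_{\Theta}(\theta_{1},\theta_{2})$ established for optically related pairs $m_{2}\in B_{m_{1}}$, which lets me transfer the metric properties of $d_{\Theta}$ on the (well-behaved) parameter space to the local building block of $d_{M}$.

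For symmetry, the point to address is that optical flow is directional: $m_{2}\in B_{m_{1}}$ a priori produces an operator $\phi_{1}\in O_{m_{1}}$, whereas $d_{M}(m_{2},m_{1})$ is computed in $O_{m_{2}}$. First I would note that the flow from $m_{1}$ to $m_{2}$ is invertible, so $m_{1}\in B_{m_{2}}$ as well, and then invoke the local isometry: both $d_{M}(m_{1},m_{2})$ and $d_{M}(m_{2},m_{1})$ are proportional to the symmetric quantity $d_{\Theta}(\theta_{1},\theta_{2})$ with the same constant, giving $d_{M}(m_{1},m_{2})=d_{M}(m_{2},m_{1})$ on optically related pairs. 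Symmetry of the global distance then follows by reversing paths: a partition $P$ of a path $c$ from $m_{1}$ to $m_{2}$ yields, upon reversal, a partition of the reversed path from $m_{2}$ to $m_{1}$ whose segment distances agree termwise with those of $P$, so the suprema over partitions and hence the infima over paths coincide.

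The triangle inequality is the easy direction for a length-type construction. Given paths $c_{1}$ from $m_{1}$ to $m_{2}$ and $c_{2}$ from $m_{2}$ to $m_{3}$, their concatenation $c_{1}\ast c_{2}$ is an admissible path from $m_{1}$ to $m_{3}$. Using that refining a partition can only increase the sum --- which follows from the local triangle inequality for $d_{M}$ on optically related triples, itself inherited from $d_{\Theta}$ via the local isometry --- the length is additive under concatenation, so $d(c_{1}\ast c_{2},m_{1},m_{3}) = d(c_{1},m_{1},m_{2})+d(c_{2},m_{2},m_{3})$. Taking infima over $c_{1}$ and $c_{2}$ then yields $d_{M}(m_{1},m_{3})\le d_{M}(m_{1},m_{2})+d_{M}(m_{2},m_{3})$.

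The main obstacle I anticipate is the converse identity of indiscernibles: that $d_{M}(m_{1},m_{2})=0$ forces $m_{1}=m_{2}$. The forward implication is clear --- the constant path gives $d_{M}(m,m)=0$ --- but ruling out a vanishing infimum for distinct points requires a strictly positive lower bound that does not degrade as the partition is refined or the path is varied. Here I would again use the local isometry: for any admissible partition $\{t_{i}\}$ the sum $\sum_{i} d_{M}(c(t_{i}),c(t_{i+1}))$ equals, up to the uniform constant $C$, the sum $\sum_{i} d_{\Theta}(\theta(t_{i}),\theta(t_{i+1}))$, which the triangle inequality in $\Theta$ bounds below by $d_{\Theta}(\theta_{1},\theta_{2})$; hence every path length, and therefore $d_{M}(m_{1},m_{2})$, is at least $C\,d_{\Theta}(\theta_{1},\theta_{2})$. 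Since the IAM is assumed homeomorphic to $\Theta$, distinct points $m_{1}\ne m_{2}$ correspond to $\theta_{1}\ne\theta_{2}$, forcing $d_{M}(m_{1},m_{2})>0$. I would also confirm that the infimum is taken over a nonempty set of paths of finite length --- i.e., that any two points are joined by a path admitting a partition with optically related consecutive images --- which follows from path-connectedness of $M$ together with the fact that $\{B_{m}\}$ is an open cover, guaranteeing that sufficiently fine partitions nest as required.
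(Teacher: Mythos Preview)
Your proof follows the same skeleton as the paper's: check each metric axiom directly, and handle the triangle inequality by concatenating paths $c_{1}\ast c_{2}$. The paper's argument is terser and does not invoke the local isometry $d_{M}\propto d_{\Theta}$ at all; it simply asserts that the local building block $d_{M}(c(t_{i}),c(t_{i+1}))$ is symmetric (because $c(t_{i})\in B_{c(t_{i-1})}$) and that distinct endpoints force every path-sum to be nonzero, then takes the infimum. Your route through $d_{\Theta}$ is more careful precisely at the two places where the paper is brief: it explains \emph{why} the directional optical-flow distance is symmetric on optically related pairs, and---more importantly---it supplies a uniform lower bound $C\,d_{\Theta}(\theta_{1},\theta_{2})$ so that the \emph{infimum} over paths, not merely each individual path-sum, is strictly positive when $m_{1}\neq m_{2}$. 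The price you pay is an implicit extra hypothesis: you need the proportionality constant in the local isometry to be uniform over basepoints, which the paper's bare-hands argument sidesteps by working directly with $d_{O}$.
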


\begin{proof} Positivity of $d_{M}(\cdot,\cdot)$ is clear, as is the fact that $d_{M}(m,m) =0$. If $m_{1} \neq m_{2}$, then for every path $c$ between $m_{1}$ and $m_{2}$, we have that $\displaystyle\sum\limits_{i=0}^{n-1} d_{M}(c(t_{i}),c(t_{i+1})) \neq 0$ and hence, $d_{M}(m_{1},m_{2}) \neq 0$. Symmetry follows from the fact that along $c$, $c(t_{i})\in B_{c(t_{i-1})}$ and hence,  $d_{M}(c(t_{i}),c(t_{i+1}))=d_{M}(c(t_{i+1}),c(t_{i}))$. For the triangle inequality, we note that given paths $c_{1}$ and $c_{2}$ from $m_{1}$ to $m_{2}$ and $m_{2}$ to $m_{3}$ respectively, the path $c_{1}*c_{2}$ obtained by traversing $c_{1}$ and $c_{2}$ in succession at twice the rate (i.e. $c_{1}*c_{2}(t) = c_{1}(2t)$ for $0\leqq t \leqq \frac{1}{2}$ and $c_{1}*c_{2}(t) = c_{2}(2t-1)$ for $\frac{1}{2}\leqq t \leqq 1$) is a path from $m_{1}$ to $m_{3}$ and $\displaystyle\sum\limits_{i=0}^{n-1} d_{M}(c_{1}(t_{i}),c_{1}(t_{i+1})) + \displaystyle\sum\limits_{i=0}^{n-1} d_{M}(c_{2}(t_{i}),c_{2}(t_{i+1})) \geqq \displaystyle\sum\limits_{i=0}^{n-1} d_{M}(c_{1}*c_{2}(t_{i}),c_{1}*c_{2}(t_{i+1}))$. By taking the infimum over all such paths, we verify the triangle inequality.\end{proof}

Recall that the metric on $O_{m}$ satisfies $d_{O}(\phi_{0},\phi) \propto d_{\Theta}(\theta_{1},\theta_{2})$ with $\phi \in O_{m}$ and $\theta_{1},\theta_{2}$ the parameters corresponding to $m$ and $\phi(m)$ respectively. From the above result, we see that, with the metric $d_{M}(\cdot,\cdot)$ on $M$, we have 
\begin{equation*}
d_{M}(m_{1},m_{2}) \propto d_{\Theta}(\theta_{1},\theta_{2}),
\end{equation*}
where $m_{1}$ and $m_{2}$ are points on $M$ corresponding to the parameter values $\theta_{1}$ and $\theta_{2}$ respectively under the assumption that the monotonicity of $d_{O}(\cdot,\cdot)$ over different $\{ O_{m} \}$ is universal, i.e., the distance between two points $m_1$ and $m_2$ does not change when we change our definitions of flow neighborhoods. 
We refer to this metric on $M$ as the \emph{flow metric}. We note that the flow metric is dependent on the OFMs $O_{m}$. We can now state the main result of this section.

\begin{figure}[t]
\centering
\includegraphics[width=0.7\textwidth]{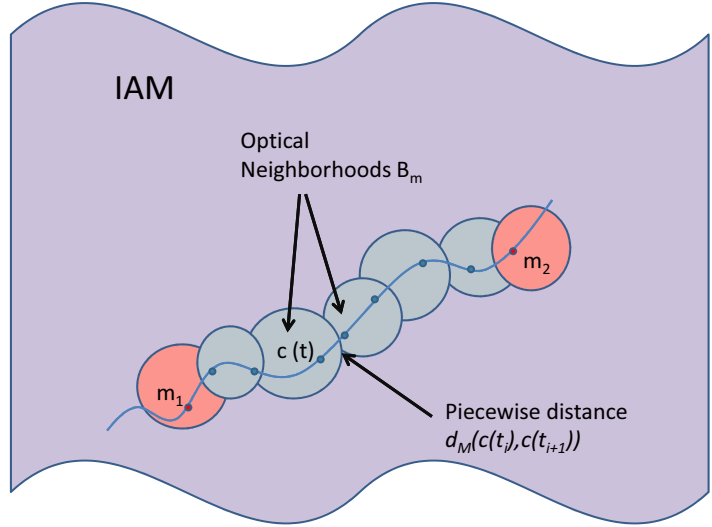}
\vspace{-4mm}
\caption{A pictorial representation of the OFMs and the flow metric for an IAM. Each OFM is mapped homeomorphically onto the flow neighborhood $B_{m}$ of the corresponding base point. The optical flow distance between two points is computed by considering the infimum over all possible curves of the piecewise flow distance $d_{M}(c(t_{i}),c(t_{i+1}))$.}
\label{fig:piece}
\end{figure}

\begin{theorem} Let $M$ be an IAM and let $d_{M}(\cdot,\cdot)$ be the associated flow metric. Then, $d_{M}(m_{1},m_{2}) \propto d_{\Theta}(\theta_{1},\theta_{2})$ as defined in (\ref{eq:metric4}) with $m_{1}$ and $m_{2}$  points on $M$ corresponding to the parameter values $\theta_{1}$ and $\theta_{2}$ respectively.\end{theorem}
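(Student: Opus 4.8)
The plan is to show that the two-stage construction in (\ref{eq:metric4}) upgrades the \emph{local} proportionality $d_{M}(m_{1},m_{2})\propto d_{\Theta}(\theta_{1},\theta_{2})$, which holds by the push-forward definition whenever $m_{1},m_{2}$ are optically related, into a \emph{global} proportionality. The conceptual key is to recognize that the inner supremum over partitions is precisely the arc length of the path $c$ as measured in the metric $d_{M}$, and that this length functional is determined entirely by \emph{local} distances, so that only the already-established local isometry is ever invoked.

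First I would observe that $d(c,m_{1},m_{2})=\sup_{P}\sum_{i} d_{M}(c(t_{i}),c(t_{i+1}))$ is, by definition, the length $L_{d_{M}}(c)$ of the curve $c$ in the metric space $(M,d_{M})$, and that $d_{M}(m_{1},m_{2})=\inf_{c}L_{d_{M}}(c)$ is the associated intrinsic (length) metric. Because $d_{M}$ satisfies the triangle inequality (the preceding proposition), refining a partition never decreases the sum, so the supremum is attained in the fine-partition limit; I may therefore restrict to \emph{admissible} partitions, namely those fine enough that every consecutive pair satisfies $c(t_{i+1})\in B_{c(t_{i})}$. Such partitions exist for any continuous $c$: each $B_{m}$ is an open neighborhood of $m$, so by continuity of $c$ and compactness of $[0,1]$ a Lebesgue-number argument yields a uniform mesh below which every segment is optically related.

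Next, on an admissible segment the definition $d_{M}(m_{1},m_{2})=d_{O}(\phi_{0},\phi_{1})$ together with the local isometry $d_{O}(\phi_{0},\phi)\propto d_{\Theta}$ gives $d_{M}(c(t_{i}),c(t_{i+1}))=C\,d_{\Theta}(\tilde c(t_{i}),\tilde c(t_{i+1}))$, where $\tilde c=i^{-1}\circ c$ is the corresponding path in $\Theta$ and $C>0$ is the \emph{uniform} constant supplied by the universal-monotonicity assumption. Summing over an admissible partition and passing to the supremum then yields $L_{d_{M}}(c)=C\,L_{d_{\Theta}}(\tilde c)$, since the $d_{\Theta}$-length is itself a supremum over partitions and is likewise computed in the fine-partition limit. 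As the imaging map $i$ is a homeomorphism, $c\mapsto i^{-1}\circ c$ is a bijection between paths in $M$ from $m_{1}$ to $m_{2}$ and paths in $\Theta$ from $\theta_{1}$ to $\theta_{2}$; taking the infimum over all paths gives $d_{M}(m_{1},m_{2})=C\,\inf_{\tilde c}L_{d_{\Theta}}(\tilde c)=C\,d_{\Theta}(\theta_{1},\theta_{2})$, where the last equality uses that $d_{\Theta}$ is intrinsic on the parameter space, so its geodesic distance recovers $d_{\Theta}$ itself.

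The main obstacle I anticipate is the local-to-global step, since the relation $d_{M}\propto d_{\Theta}$ invoked inside each summand is exactly the global conclusion being sought; the discipline is to apply it \emph{only} to optically related pairs, where it holds by construction, and to rely on the refinement argument to guarantee that the length functional never needs more than these local distances. Two standing hypotheses also warrant explicit statement: the uniformity of $C$ across the different $O_{m}$ (the universal-monotonicity assumption), without which the summands would carry position-dependent constants and the length would fail to factor; and the intrinsic nature of $d_{\Theta}$, which is what lets the final infimum over paths collapse to $d_{\Theta}(\theta_{1},\theta_{2})$ rather than merely to the length metric it induces.
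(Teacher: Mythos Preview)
Your proposal is correct and follows the same conceptual route as the paper: pass from the local proportionality $d_{O}\propto d_{\Theta}$ on each flow neighborhood to the global statement via the supremum--infimum construction (\ref{eq:metric4}). The paper itself gives essentially no formal proof of this theorem; it states the result as an immediate consequence of the preceding proposition and the local isometry, conditioned on the ``universal monotonicity'' assumption. Your argument is a rigorous elaboration that makes explicit the length-metric interpretation, the Lebesgue-number argument guaranteeing admissible partitions, and the two standing hypotheses (uniformity of the constant $C$ across $\{O_{m}\}$ and the intrinsic nature of $d_{\Theta}$) that the paper only gestures at.
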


The smoothness of the OFMs makes them amenable for use with conventional manifold processing tools. For instance, common dimensionality reduction methods such as \cite{ISOMAP,LLE,LLLE}, etc.\ assume that the manifold is smooth and, hence, are tools that are more appropriate for use with the flow metric. As an example, consider Fig.\ \ref{fig:OFMvsIAMISOMAP1}. Here, we see that the pairwise flow metric between points varies smoothly, as opposed to the Euclidean metric on the IAM, where due to sharp edges, the matrix of pairwise distance has large off-diagonal entries. Moreover, the residual error of non-linear dimensionality reduction using the flow metric decays very rapidly. As a result, it is more tractable to analyze IAMs using the  flow metric as opposed to the conventional Euclidean metric.

As an example, consider the problem of estimating $\theta\in \Theta$ such that $i(\theta)=m$ given a finite number of template points $m_{1},\cdots,m_{n}\in M$ with neighborhoods $B_{m_{1}},\cdots B_{m_{n}}$ that cover $M$. As a motivating special case, consider first the situation when $M=M_{T}$, the translational manifold with $\Theta=\mathbb{R}^{2}$ and a single template point $m'$. A similar problem has been dealt with in \cite{rao1,culpepper1} where the authors estimate the Lie group generators corresponding to the IAM. Given the parameter value $\theta'=(c_{1}',c_{2}')$ of the base image $m'$, we can compute the the parameter $\theta$ corresponding to $m$ as follows. First, we find the unique flow operator $\phi = (\phi_{1},\phi_{2})\in \mathbb{R}^{2}$ such that $\phi(m')=m$. The parameter value $\theta$ corresponding to $m$ is then obtained as $\theta = (c_{1}'+\phi_{1},c_{2}'+\phi_{2})$.

\sloppy
An entirely similar result holds for generic IAMs generated by Lie group actions. From this simple example, we see that finding the optimal $\theta$ is equivalent to finding the optimal flow operator $\phi$ that minimizes $d_{M}(\phi(m'),m)$ with $\phi\in O_{m'}$.

We return to the general case with $n$ template images and seek the optimal flow operator $\phi\in O_{m_{i}}$ that minimizes $d_{M}(\phi(m_{i}),m)$, $i=1,\cdots,n$. Here, a single neighborhood does not cover the entire IAM, and, hence, to estimate $\phi$ we first find the neighborhood $B_{\tilde{m}}\in\{B_{m_{1}},\cdots B_{m_{n}}\}$ such that $\tilde{m} = \arg\min_{m'\in \{m_{1},\cdots,m_{n}\}}d_{M}(m,m').$ Our search is then restricted to the neighborhood $B_{\tilde{m}}$. Within this neighborhood, we find the optimal $\phi\in O_{\tilde{m}}$ as above, i.e., $\phi = \arg\min_{\tilde{\phi}\in O_{\tilde{m}}}d_{M}(\tilde{\phi}(\tilde{m}),m)$. In essence, we first find the optimal template point and then search within the corresponding OFM for the optimal flow operator. If a single template point generates the entire IAM, then this 
procedure clearly reduces to the Lie group case discussed earlier. We thus see that the generic OFM formulation includes the algebraic methods of \cite{rao1,tuzel1} as a special case.

The remainder of the paper is devoted to developing geometric tools for IAMs that leverage the flow metric. Note that, unlike the tangent space, the OFM has no linear structure and, hence, we do not have at our immediate disposal tools such as parallel translation, covariant derivatives etc. We will construct analogous tools for our purposes via the flow metric and hence open up a vista for IAM analysis. 

\section{Geometric Tools for IAMs via the Flow Metric}\label{sec:difftools}
In this section, we develop the basic tools needed to analyze the structure of IAMs using the flow metric. We will pay special attention to flow operators defined along curves on the corresponding IAMs. Keeping computations in mind, these tools will open the door to a variety of applications as mentioned in the Introduction and beyond.

\subsection{Flow radius}
We first seek an appropriate measure of the size of an OFM and the corresponding flow neighborhood. In classical differential geometry, one measures the radius at a point in terms of the injectivity radius \cite{docarmo} using the Riemannian metric. In a similar fashion, we will measure the radius of a point $m$ in an IAM $M$ in terms of the flow metric.

\subsubsection{Flow Radius at $m\in M$}
\begin{definition} Given an IAM $M$ and $m\in M$ define the \emph{flow radius} or simply the \emph{radius} $r_{m}$ at $m$ as
\begin{equation*}r_{m} = \sup_{n\in B_{m}} d_{M}(m,n).\end{equation*}
\end{definition}
Note that we may regard $r_{m}$ as a function from $M$ to $\mathbb{R}^{+}$ i.e. $r(m) = r_{m}$ is a map from $M$ into the non-negative reals. Moreover, it is continuous as a function of $m$. Consider the variations of $r_{m}$ as $m$ varies. If $r_{m}$ is large, then one can find a suitable operator $\phi\in O_{m}$ that transports $m$ to a far away point, with distance measured using the flow metric. Conversely, a small $r_{m}$ indicates that $m$ can only be transported within a small region, or said differently, $m$ obstructs transport on the IAM. Moreover, rapid changes in the magnitude of $r_{m}$ within a small neighborhood indicate that the manifold is not well-behaved near $m$. In particular, this indicates that there are several points close to $m$ that obstruct transport while there are also several points that allow flow over large distances on the IAM.

\subsubsection{Flow radius for Lie groups}

A class of IAMs for which $r_{m}$ is very well behaved, indeed for which $r_{m}$ is a constant, are those generated by a Lie group action. As a motivating example, consider again the translational manifold $M_{T}$. We note that as $O_{m}=\mathbb{R}^{2}$ for each $m\in M_{T}$, it follows that $r_{m}=\infty$. If the parameter space $\Theta$ is compact, for instance, if $\Theta = S^{1}$ and we consider affine rotations of a base image $m$ generating the IAM $M$, then we again have that $B_{m}=M$ and $r_{m}$ is a constant $0<c<\infty$. For a generic IAM $M$ generated by Lie group actions with $B_{m}=M$, it follows that the flow radius is a constant whose exact value depends both on the object being imaged as well as the nature of the articulation.

\subsubsection{Flow curvature}
The above discussion indicates that the reciprocal $\frac{1}{r_{m}}$ is a measure of ``curvature". When $r_{m}$ is large (or infinite), then the IAM is can be thought as being ``optically flat" at $m$ in the sense that there is no obstruction to transport on the manifold at $m$. In the other limiting case, i.e.,  if $r_{m}$ approaches zero, the only operator in $O_{m}$ is the trivial operator $\phi_{0}$ and hence, $M$ is has high ``curvature" at $m$. We thus have the following definition.

\begin{definition} Given an IAM $M$ and $m\in M$  define the \emph{flow curvature} or simply the \emph{curvature} at $m$ as $K_{m} = \frac{1}{r_{m}}.$\end{definition}While the traditional notion of curvature is a point property, the flow curvature depends on both the base point as well as its neighborhood properties. In terms of flow curvature, we can now state that if $M$ is generated by a Lie group then $M$ has constant curvature. The class of IAMs with constant curvature will play a prominent role in our later analysis.

\subsection{Optical flow fields}
In this section, we  focus on a construct motivated by differential geometry, namely the idea of a vector field on a manifold. Recall that a vector field is a \emph{section} of the the tangent bundle, i.e., a vector field is a map $\sigma: M\rightarrow TM$ such that $\pi\circ\sigma = id_{M}$, where $\pi$ is the natural projection from the tangent bundle and $id_{M}$ is the identity map on $M$. In an analogous fashion, we define an \emph{optical flow field}, or simply a \emph{flow field}, as a section of the \emph{optical flow bundle} of $M$.

While vector fields are defined generically on manifolds, the special class of vector fields along curves is especially important in differential geometry. Vector fields along curves give rise to tools such as parallel translation, Jacobi fields, etc. \cite{docarmo}. In the case of IAMs, which lack analytic structure in general, we will define optical fields along curves on an IAM and recover similar geometric tools using the flow metric. As we assign operators to points on the curve, we intuitively would like the transport induced by the operator to remain along the curve so that the collection of flow operators induces {motion} along the curve. This special class of optical fields will be our main object of study for the rest of the paper.

\begin{definition} Let $M$ be an IAM and let $c$ be a smooth curve passing through $m_{1},m_{2}\in M$. Define $B_{c(t)}\bigcap c = \{c(t')\in O_{c(t)}: t'\geq t\}$. An \emph{optical field from $m_{1}$ to $m_{2}$ along $c$} with $m_{1},m_{2}\notin\partial c$ is a map $V:t\mapsto {OM}$ such that $V(t):=V_{t}\in O_{c(t)}$ and $V_{t}(c(t))\in \overline{B_{c(t)}\bigcap c}$ where by $V_{t}(c(t))$ we mean the point on the curve obtained by the action of $V_{t}$ on $c(t)$.\end{definition}

In other words, an optical field along $c$ is an assignment of a flow operator $V_{t}$ with $V_{t}$ an element of the OFM at $c(t)$ such that such that the action $V_{t}$ on $c(t)$ remains on the curve. Thus, the action of $V_{t}$ on the base point $c(t)$ at time $t$ induces motion along the curve. When $m_{1}$ and $m_{2}$ are clear from the context, we simply refer to $V_{t}$ as the optical field along $c$.

In essence, the curve $V_{t}$ traces a curve in ${OM}$ as $t$ varies with a consistent action on $c(t)$. By $B_{c(t)}\bigcap c$ we mean the intersection of the $K$-dimensional flow neighborhood $B_{c(t)}$ with the one dimensional curve $c$ starting at $c(t)$ and hence, $B_{c(t)}\bigcap c$ is a one dimensional embedded curve in $B_{c(t)}$ i.e., $B_{c(t)}\bigcap c$ is a one dimensional ``slice" of $B_{c(t)}$. We assume that $B_{c(t)}\bigcap c$ is connected.

To measure the distance traveled by the action of $V_{t}$ on $c(t)$, we define a radius $r_t$ restricted along the curve as opposed to the complete flow radius $r_{c(t)}$ at $c(t)$ \begin{equation*}r_{t} = \sup_{n\in B_{c(t)}\bigcap c} d_{c}(c(t),n),\end{equation*}where $d_{c}(\cdot,\cdot)$ is the flow metric restricted to the curve $c$. Likewise, the curvature $K_{t}$ along $c$ is the ratio \begin{equation*}K_{t}=\frac{1}{r_{t}}.\end{equation*}Since $B_{c(t)}\bigcap c$ is connected and 1D, the distance $d_{c}(c(t),m)$ between $c(t)$ and $m\in B_{c(t)}\bigcap c$ characterizes $m$ in the following sense. Given any positive constant $0 \leq \eta \leq r_{t}$ there is a unique $m\in B_{c(t)}\bigcap c$ with $d_{c}(c(t),m)=\eta$. Thus, by specifying the distance along the curve $c$, we effectively characterize the curve. Lifting this observation into $O_{c(t)},$ we have the following result.

\begin{theorem} Let $M$ be an IAM, let $c$ be a curve passing through $m_{1},m_{2}\in M$, and let $V_{t}$ be an optical field from $m_{1}$ to $m_{2}$ along $c$. Then, $V_{t}$ is completely characterized by the function $h_{V}(t) = d_{c}(V_{t}(c(t)),c(t))$ in the sense that for any non-negative function $h(t)$ bounded pointwise by $r_{t}$, there exists a unique optical optical field $V_{t}$ such that $h_{V}(t)=h(t).$\end{theorem}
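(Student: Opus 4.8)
The plan is to show that the assignment $V \mapsto h_V$ is a bijection between optical fields from $m_1$ to $m_2$ along $c$ and the set of non-negative functions $h$ with $h(t)\le r_t$ for every $t$. The key observation I would use is that, at each fixed $t$, the scalar $h_V(t)$ already determines $V_t$ through two bijections supplied by the preceding development: (i) the arc-length bijection recorded just before the statement, namely that $m\mapsto d_c(c(t),m)$ carries $\overline{B_{c(t)}\cap c}$ bijectively onto $[0,r_t]$; and (ii) the homeomorphism $O_{c(t)}\cong B_{c(t)}$ of the Proposition, under which each image point $m'\in B_{c(t)}$ has a unique preimage operator $\phi\in O_{c(t)}$ with $\phi(c(t))=m'$. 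Because the definition of an optical field imposes no constraint linking distinct fibers, the whole correspondence is purely fiberwise in $t$, so it suffices to run the argument at each $t$ separately. Note also that the map lands in the stated codomain automatically: since $V_t(c(t))\in\overline{B_{c(t)}\cap c}$, bijection (i) forces $h_V(t)\in[0,r_t]$.

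For the uniqueness half (``$V_t$ is completely characterized''), I would suppose $V$ and $W$ are optical fields along $c$ with $h_V=h_W$ and fix an arbitrary $t$. Then $V_t(c(t))$ and $W_t(c(t))$ both lie in $\overline{B_{c(t)}\cap c}$ and are equidistant from $c(t)$ under $d_c$, at distance $h_V(t)=h_W(t)$. By bijection (i), the point of $\overline{B_{c(t)}\cap c}$ at a prescribed distance from $c(t)$ is unique, so $V_t(c(t))=W_t(c(t))$. Applying bijection (ii) to this common image point, the operator in $O_{c(t)}$ sending $c(t)$ to it is unique, whence $V_t=W_t$; as $t$ was arbitrary, $V=W$.

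For existence, I would take any non-negative $h$ with $h(t)\le r_t$ and, for each fixed $t$, apply bijection (i) with $\eta=h(t)\in[0,r_t]$ to obtain the unique $m_t\in\overline{B_{c(t)}\cap c}$ with $d_c(c(t),m_t)=h(t)$, then apply bijection (ii) to $m_t$ to obtain the unique $V_t\in O_{c(t)}$ with $V_t(c(t))=m_t$. Setting $V:t\mapsto V_t$ gives $V_t\in O_{c(t)}$ and $V_t(c(t))=m_t\in\overline{B_{c(t)}\cap c}$, so $V$ meets the definition of an optical field along $c$, and by construction $h_V(t)=d_c(V_t(c(t)),c(t))=d_c(m_t,c(t))=h(t)$, using symmetry of the metric $d_c$.

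The one genuinely delicate point, and the part I would treat most carefully, concerns the endpoints of the range rather than any deep structural fact. The radius $r_t$ is a supremum over $B_{c(t)}\cap c$ that need not be attained inside $B_{c(t)}\cap c$ itself; this is precisely why the definition of an optical field (and hence the target point $m_t$) is phrased using the closure $\overline{B_{c(t)}\cap c}$, on which the supremum is realized and on which bijection (i) extends to the closed interval $[0,r_t]$. I would therefore restate (i) explicitly on the closure before invoking it, so that the boundary value $\eta=r_t$ is covered and surjectivity onto the full range is genuine. No continuity of $t\mapsto V_t$ is demanded by the definition, so the construction is legitimately pointwise; should one additionally want $V$ continuous whenever $h$ is continuous, this would follow from the smoothness of the OFM structure (the fiberwise bijections varying smoothly in $t$), but it is not needed for the stated result.
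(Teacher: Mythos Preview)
Your proposal is correct and follows essentially the same route as the paper: a fiberwise argument at each fixed $t$, first invoking the arc-length bijection $\eta\leftrightarrow m$ on $B_{c(t)}\cap c$ recorded just before the theorem, then the $O_{c(t)}\cong B_{c(t)}$ bijection to pass from the image point $m_t$ to the unique operator $V_t$. You are in fact more careful than the paper on two points: you spell out the injectivity direction explicitly (the paper only writes out existence), and you address the boundary case $h(t)=r_t$ via the closure, which the paper's proof sidesteps by silently writing a strict inequality.
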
\label{}
\begin{proof}That $0\leq h_{V}(t)\leq r_{t}$ is clear from the definitions above. Let $h(t)$ be any non-negative function bounded by $r_{t}$. Then, for fixed $t=t_{0}$ we have that $0 \leq h(t_{0}) < r_{t_{0}}$. By the remark made previously, we have a unique $m_{t_{0}}\in B_{c(t_{0})}\bigcap c$ with $d_{c}(c(t_{0}),m_{t_{0}})=h(t_{0})$. Now, as $m_{t_{0}} \in B_{c(t_{0})}\bigcap c$, in particular, $m_{t_{0}}\in B_{c(t_{0})}$ and hence there exists a unique $\phi_{t_{0}}\in O_{c(t_{0})}$ such that \begin{equation*}\phi_{t_{0}}(c(t_{0})) = m_{t_{0}}.\end{equation*} As $t_{0}$ was arbitrary, as $t$ varies, we can define $V_{t} =  \phi_{t}$. Moreover, \begin{equation*}h_{V}(t) = d_{c}(V_{t}(c(t)),c(t)) = d_{c}(\phi_{t}(c(t)),c(t)) =  d_{c}(m_{t},c(t)) = h(t)\end{equation*} so that $h(t)$ characterizes $V_{t}$.\end{proof}Recall that we view an optical flow field $V_{t}$ along $c$ as a curve in $OM$. Theorem 10 states that this curve is characterized by the function $h_{V}(t)$ in the sense that for a given $V_{t}$ along the (fixed) curve $c$, the function $h_{V}(t)$ contains all the information about the motion induced by $V_{t}$ on the curve i.e., $h_{V}(t)$ measures the distance to which $V_{t}$ transports $c(t)$ in $B_{c(t)}\bigcap c$. Since this function is of prime importance, we make the following definition.

\begin{definition} Given an optical field $V_{t}$ along a curve $c$, define its \emph{motion function} to be \begin{equation*}h_{V}(t) = d_{c}(V_{t}(c(t)),c(t)).\end{equation*}\end{definition}Note that by continuity of the metric $d_{c}(\cdot,\cdot)$ and $c(t)$, the function $h_{V}(t)$ is also continuous. 

\subsection{Parallel flow fields}
There is a very natural geometric interpretation of the motion function $h_{V}$ of an optical flow field $V_{t}$ along a curve $c$. Namely, it is a measure of the distance traveled along the curve at time $t$ by $c(t)$ when acted upon by $V_{t}$. Thus, it is natural to think of instantaneous changes in $h_{V}$ in $t$ as a measure of the \emph{velocity} of the motion induced by $V_{t}$ on $c(t)$. In classical geometry, the class of constant velocity curves is especially important; they correspond to uniform motions. Similarly, we will be interested in the class of constant motion functions $h_{V}$. These correspond to optical fields $V_{t}$ along $c$ that induce uniform motion along $c$, where by uniform motion we mean the distance traveled along $c$ is constant for all time.

Another key link with classical differential geometry is the notion of parallel transport (or parallel translation) of tangent vectors \cite{docarmo}. Parallel transport is a key tool for ``moving'' tangent vectors from different tangent spaces while preserving direction/orientation along the curve. Parallel transport utilizes the inherent linear structure of the tangent space to define a linear map between tangent on points along the curve. We aim to develop a similar analytic tool for IAMs. However, an immediate stumbling block is the clear lack of \emph{linear structure} in the OFM. We therefore take a different approach to defining parallel translation in the OFM case using motion functions.

To motivate our definition, we recall that a vector field along a curve $c$ on a manifold is \emph{parallel} if its \emph{covariant derivative} along $c$ vanishes. The covariant derivative is in essence a way of differentiating the vector field along $c$. The relation between parallel translation and parallel vector fields is that, given a tangent vector $v$ in the tangent space of a point $c(t)$ on the curve $c$, it is possible to extend $v$ along $c$ by parallel translation to yield a parallel vector field along $c$. In the OFM case, we have the motion function of an optical field along $c$ at our disposal and we use it to characterized parallelism of the field along the curve.

\begin{definition} 
An optical field $V_{t}$ along a curve $c$, is {\em parallel} if the derivative of its motion function with respect to $t$ is zero, or equivalently, if the motion function is constant along $c$, i.e., $h_{V}(t)$ is a constant.
\end{definition}
In all that follows, we will denote by $\Omega(c,m_{1},m_{2})$ the space of all optical fields through $m_{1}$ and $m_{2}$ along a curve $c$ passing through $m_{1},m_{2}\in M$. When $m_{1},m_{2}$ are understood from context, we simply refer to this space as $\Omega(c)$. The subclass of parallel fields along $c$ will be denoted by $\omega(c)$.

A few facts are immediate from the above definition. First, since $h_{V}(t)\leq r_{t}$ for all $t$, it is clear that if $V_{t}$ is to be parallel along $c$, then $h_{V}(t)$ is a constant $h_{V}$ independent of $t$ and $h_{V} \leq \inf_{t} r_{t}$. Therefore, in the future, we will suppress the argument $t$ in the motion function $h_{V}(t)$ of a parallel flow field. Second, since $V_{t}$ is characterized by $h_{V}$, we see that for any constant $\delta$ such that $0\leq \delta < \inf_{t} r_{t}$, there is a parallel optical field along $c(t)$ such that $h_{V}=\delta$. Such a parallel optical field can be obtained, for example, by choosing for each $t$ a flow operator $\phi_{t} \in O_{c(t)}$ with $d_{c}(\phi_{t}(c(t)),c(t))=\delta$. The existence of such an optical field $\phi_{t}$ is guaranteed by the above theorem.
Given $\phi \in O_{c_{0}}$ we aim to extend $\phi$ throughout the curve to obtain a flow field $V_{t}$ such that $V_{t}$ is parallel along $c$ with $h_{V}=d_{c}(\phi(c_{0}),(c_{0}))$. The following result allows for such parallel translation of flow operators along a curve.

\begin{proposition}
Let $M$ be an IAM, and let $c$ be a curve through $m_{1},m_{2}\in M$. Let $\epsilon = \inf_{t} r_{t}$. Then, given any $\phi\in O_{c(t_{0})}$ with $\delta = d_{c}(\phi(c(t_{0})),c(t_{0})) < \epsilon$ there exists a unique parallel optical field along $c(t)$ with $h_{V}(t) = \delta.$
\end{proposition}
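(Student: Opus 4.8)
The plan is to recognize this proposition as a direct corollary of the characterization theorem (Theorem~10). By definition a flow field is parallel precisely when its motion function is constant, so to build a parallel field with $h_{V}\equiv\delta$ I need only produce \emph{the} optical field whose motion function is the constant function $\delta$, and then check that it restricts to the given $\phi$ at $t_{0}$. The entire argument hinges on one observation: the hypothesis $\delta < \epsilon = \inf_{t} r_{t}$ makes the constant function $h(t)\equiv\delta$ an admissible motion function in the sense of Theorem~10.

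Concretely, I would first note that $h(t)=\delta$ is non-negative and, since $\delta < \inf_{t} r_{t} \leq r_{t}$ for every $t$, is bounded pointwise (indeed strictly) by $r_{t}$; thus $h$ satisfies the hypotheses of Theorem~10. Applying that theorem yields a unique optical field $V_{t}$ along $c$ with $h_{V}(t)=\delta$ for all $t$, and because this motion function is constant, $V_{t}$ is parallel by the definition of a parallel flow field. This establishes existence. To see that $V$ genuinely extends the prescribed $\phi$, I would invoke fiberwise uniqueness at the single base point $c(t_{0})$: by the remark preceding Theorem~10 there is a unique $m_{t_{0}}\in B_{c(t_{0})}\bigcap c$ with $d_{c}(c(t_{0}),m_{t_{0}})=\delta$, and by the homeomorphism $O_{c(t_{0})}\cong B_{c(t_{0})}$ a unique operator in $O_{c(t_{0})}$ carrying $c(t_{0})$ to $m_{t_{0}}$; since both $V_{t_{0}}$ and $\phi$ move $c(t_{0})$ a distance $\delta$ along $c$, they must coincide.

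For uniqueness of the parallel field, I would suppose $W_{t}$ is any parallel field with $h_{W}\equiv\delta$ (parallelism forces its motion function to be the constant $\delta$). Then $V$ and $W$ are optical fields with identical motion functions, so the uniqueness clause of Theorem~10 forces $W_{t}=V_{t}$ for all $t$, which also covers the case of a second parallel extension of $\phi$.

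I do not expect a genuine obstacle here, since the real work is carried by Theorem~10; the one point requiring care is that the strict inequality $\delta < r_{t}$ must hold \emph{uniformly} in $t$, which is exactly what passing to the infimum $\epsilon=\inf_{t} r_{t}$ provides. Without it, $\delta$ could approach $r_{t}$ at some $t$ and the constant motion function could fail admissibility at that time. Continuity of $t\mapsto V_{t}$ as a curve in $OM$ is already subsumed in the assertion that $V_{t}$ is a bona fide optical field (and in the continuity of the motion function noted after its definition), so it needs no separate treatment.
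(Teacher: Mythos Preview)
Your proposal is correct and follows essentially the same route as the paper: apply Theorem~10 to the constant function $h(t)\equiv\delta$, which is admissible because $\delta<\epsilon\le r_{t}$ for all $t$, and then note that a constant motion function means the resulting field is parallel. You are in fact slightly more thorough than the paper, which does not explicitly verify that $V_{t_{0}}=\phi$ or spell out the uniqueness argument.
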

\begin{proof}
As $\delta < \epsilon$, in particular $\delta < r_{t}$. Thus, invoking the previous theorem for the special case of the constant function $h(t)=\delta$, we have the existence of a unique optical field $V_{t}$ along $c(t)$ such that $h_{V}(t) = \delta$. Moreover, since $h_{V}(t)$ is constant, $V_{t}$ is parallel. 
\end{proof}
In contrast with the classical case, parallel transport along a curve is dependent on the nature of the flow operator, i.e., an arbitrary flow operator $\phi \in O_{c_{0}}$ cannot be parallel translated along $c$ unless $d_{c}(\phi(c_{0}),c_{0})<\inf_{t} r_{t}$. This constraint is related to the nature of the curve; parallel transport along a curve that contains points with high curvature $K_{t}$ is limited to those flow operators that induce smaller motion along the curve. Moreover, the possibility of parallel transport of $\phi \in O_{c(t)}$ has a global dependence, i.e., it depends on the curvature of the entire curve, not only the curvature $K_{t}$ at the point $c(t)$.

Thus, we are naturally led to study those curves for which parallel translation of an operator $\phi$ at a single point $c(t_{0})$ ensures the existence of parallel translation of operators at any other point on the curve. Clearly, the necessary condition is the invariance of $K_{t}$ with $t$, and therefore we are led to consider curves for which the curvature $K_{t}$ is independent of $t$, i.e., a constant. This special class of curves has a very rich structure that we explore in the following sections.

\subsection{Approximation of arbitrary flow fields by parallel flow fields}
In this section, we consider the problem of approximating an arbitrary $V_{t}\in\Omega(c)$ by elements in $\omega(c)$. Consider an optical field $V_{t}\in\Omega(c)$ (not necessarily parallel) along a curve $c$ of constant flow curvature. A natural question to ask is how far away $V_{t}$ is from being parallel. One way to do this is to seek the ``best'' approximation of $V_{t}$ by a parallel field $W_{t}\in\omega(c)$ along $c$. To quantify the approximation, we consider the following cost 
\begin{equation*}
e(t) = d_{c}(V_{t}(c(t)),W_{t}(c(t))).
\end{equation*}
In words, $e(t)$ is a measure of how the action of $V_{t}$ on $c(t)$ differs from the action of $W_{t}$ on $c(t)$. Note that $e(t)$ is bounded above by $h_{V}(t)+h_{W}$ since $e(t)=d_{c}(V_{t}(c(t)),W_{t}(c(t))) \leq d_{c}(V_{t}(c(t)),c(t))+d_{c}(c(t),W_{t}(c(t)))=h_{V}(t)+h_{W}$.

While $e(t)$ is a pointwise error, we will need to consider the total error over the entire curve. To this end, a natural choice of error metric is
\begin{equation*}E(V,W) = \displaystyle\int^b_a e(t)dt,\end{equation*}
where the domain of $c$ is the interval $(a,b)$.
Thus, our goal is to find a parallel optical field $W_{t}\in\omega(c)$ that minimizes $E(V,W)$, i.e., \begin{equation*}W^{*} = \arg\min_{W} E(V,W).\end{equation*}
In general, a minimizer may not exist, or it may not be unique if one exists. However, the greatest issue is the strong dependence of the error $e(t)$ on the flow metric, which prevents a generic solution to the minimization problem
since we cannot infer the convexity of the problem as stated. We can, however, obtain a universal lower bound on the error independent of the flow metric as follows
\begin{equation*}|h_{V}(t)-h_{W}| = |d_{c}(V_{t}(c(t)),c(t))-d_{c}(W(c(t)),c(t))|\leq e(t).\end{equation*}
Therefore, we seek a minimizer of $\displaystyle\int^b_a |h_{V}(t)-h_{W}|dt$. Note that since $W_{t}$ is parallel, $h_{W}$ is a constant, say $h\geq0$. Moreover, since $c$ is a path of constant flow curvature, $r_{t}$ is a constant $r>0$. Since $h$ characterizes $W_{t}$, a minimizer $h^{*}$ yields a lower bound for $E(V,W)$. Our goal then, is to find an optimal constant $h^{*}$ that minimizes $\widetilde{E(h)} = \displaystyle\int^b_a |h_{V}(t)-h|dt$. We claim that a solution $h^{*}$ is a certain ``median" of $h_{V}(t)$.
We first define $A_{k} = \{t\in(a,b): h_{V}(t)>k\}$ and $B_{k} = \{t\in(a,b): h_{V}(t) \leq k\}$ for some $k$. We claim that an optimal constant $\hat{h}$ is such that $\lambda(A_{\hat{h}})=\lambda(B_{\hat{h}})$, where $\lambda(S)$ denotes the measure of a set $S$.

\begin{theorem}
Let $\widehat{h}$ be the constant such that $\lambda(A_{\widehat{h}})=\lambda(B_{\widehat{h}})$. Then, $\widehat{h}$ minimizes $\widetilde{E(h)}$.
\end{theorem}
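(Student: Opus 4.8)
The plan is to recognize this statement as the familiar fact that a median minimizes the $L^1$ deviation, specialized here to the function $h_V$ on the interval $(a,b)$ under Lebesgue measure $\lambda$. Since $h_V(t)$ is continuous (it is the composition of the continuous metric $d_c$ with the continuous curve, as noted just after the definition of the motion function) and bounded by $r$, it is measurable and integrable, so every $\widetilde{E}(h) = \int_a^b |h_V(t) - h|\,dt$ is finite and the problem is well posed. I would first observe that $\lambda(A_{\widehat h}) + \lambda(B_{\widehat h}) = b-a$, so the defining condition $\lambda(A_{\widehat h}) = \lambda(B_{\widehat h})$ forces each to equal $(b-a)/2$; this is the only property of $\widehat h$ the argument will use.

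Rather than differentiate $\widetilde{E}$ (which is convex but only piecewise differentiable, and would require justifying differentiation under the integral sign and handling the set $\{h_V = \widehat h\}$), I would argue by a direct pointwise comparison. Fix an arbitrary competitor $h$ and split on the sign of $h - \widehat h$; by symmetry it suffices to treat $h > \widehat h$, since the case $h < \widehat h$ is identical with the roles of $A$ and $B$ interchanged and $h = \widehat h$ is trivial. Then write
\begin{equation*}
\widetilde{E}(h) - \widetilde{E}(\widehat h) = \int_a^b \left( |h_V(t) - h| - |h_V(t) - \widehat h| \right) dt
\end{equation*}
and analyze the integrand on the three sets $\{h_V \le \widehat h\}$, $\{\widehat h < h_V < h\}$, and $\{h_V \ge h\}$. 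On the first set the integrand equals exactly $h - \widehat h$; on the third it equals exactly $-(h - \widehat h)$; and on the middle set it equals $h + \widehat h - 2 h_V(t)$, which is bounded below by $-(h - \widehat h)$.

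The key step is then to group the middle set together with $\{h_V \ge h\}$, since the integrand is $\ge -(h-\widehat h)$ on both and their union is exactly $A_{\widehat h} = \{h_V > \widehat h\}$, while $\{h_V \le \widehat h\}$ is exactly $B_{\widehat h}$. This yields
\begin{equation*}
\widetilde{E}(h) - \widetilde{E}(\widehat h) \;\ge\; (h - \widehat h)\,\lambda(B_{\widehat h}) - (h - \widehat h)\,\lambda(A_{\widehat h}) \;=\; (h - \widehat h)\bigl( \lambda(B_{\widehat h}) - \lambda(A_{\widehat h}) \bigr),
\end{equation*}
which vanishes precisely because $\lambda(A_{\widehat h}) = \lambda(B_{\widehat h})$. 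Hence $\widetilde{E}(h) \ge \widetilde{E}(\widehat h)$ for every $h$, establishing optimality. I expect the only real subtlety to be the middle region $\{\widehat h < h_V < h\}$, where the integrand is not simply $\pm(h-\widehat h)$; the whole argument hinges on noting it is nonetheless bounded below by $-(h-\widehat h)$ so that it absorbs into the $A_{\widehat h}$ term without destroying the cancellation. A secondary bookkeeping point is the placement of the equality sets $\{h_V = \widehat h\}$ and $\{h_V = h\}$ in the $\le$ versus $<$ definitions of $A_k, B_k$; these may have positive measure, but since the integrand is continuous across the thresholds the bound is unaffected. Finally I would remark that the minimizer need not be unique—if $h_V$ is constant at the median level on a subinterval, any value in that level set is equally optimal—which is consistent with the paper's phrasing of $\widehat h$ as ``a certain median.''
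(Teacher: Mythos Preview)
Your proof is correct and uses essentially the same three-region decomposition and pointwise bounding of the integrand as the paper's proof. Your version is slightly more streamlined in that you compare $\widehat{h}$ directly against an arbitrary competitor $h$, whereas the paper first invokes convexity to produce a minimizer $h^{*}$ and then compares $h^{*}$ to $\widehat{h}$; the underlying computation on the three sets is identical.
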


\begin{proof}
Note that the function $\widetilde{E(h)}$ is convex in $h$ with $h\in(a,b)$ and $(a,b)$ a convex interval. Therefore, we are guaranteed a minimizer $h^{*}$. Now, without loss of generality, we assume that $h^{*}<\widehat{h}$. We evaluate the cost over the two regions $A_{\widehat{h}}$ and $B_{\widehat{h}}$ \begin{equation*}\widetilde{E(h^{*})}=\displaystyle\int^b_a |h_{V}(t)-h^{*}|dt =\displaystyle\int_{A_{\widehat{h}}} |h_{V}(t)-h^{*}|dt+\displaystyle\int_{B_{\widehat{h}}} |h_{V}(t)-h^{*}|dt.\end{equation*}

Define $B_{1}=\{t:h^{*}<h_{V}(t)<\hat{h}\}$ and $B_{2}=\{t:h_{V}(t)<h^{*}\}$ and note that $\lambda(A_{\widehat{h}})-\lambda({B_{1}})-\lambda({B_{2}})=0$. We can now express $\widetilde{E(h^{*})}$ in terms of $\widetilde{E(\widehat{h})}$ as
\begin{equation*}
\widetilde{E(h^{*})} = \widetilde{E_{A_{\widehat{h}}}}(\widehat{h})+ (\widehat{h}-h^{*})\lambda(A_{\widehat{h}})+\widetilde{E_{B_{\widehat{h}}}}(\widehat{h})-(\widehat{h}-h^{*})\lambda(B_{2})-\alpha,\end{equation*}
where $\widetilde{E_{X}}(\widehat{h})$ denotes the cost function restricted to the subset $X\subseteq(a,b)$ and $\alpha$ is a positive constant that measures the difference of $\widehat{h}-h^{*}$ on the set $B_{1}$. The maximum value of $\alpha$ is $(\widehat{h}-h^{*})\lambda({B_{1}})$ and hence \begin{equation*}\widetilde{E(h^{*})}\leq\widetilde{E_{A_{\widehat{h}}}}(\widehat{h})+ (\widehat{h}-h^{*})(\lambda(A_{\widehat{h}})-\lambda({B_{1}})-\lambda({B_{2}}))+\widetilde{E_{B_{\widehat{h}}}}(\widehat{h}).\end{equation*} 
Since $\lambda(A_{\widehat{h}})-\lambda({B_{1}})-\lambda({B_{2}})=0$, we conclude that \begin{equation*}\widetilde{E(h^{*})}\leq\widetilde{E(\widehat{h})}\end{equation*} and hence $h^{*}=\widehat{h}$. The case of $h^{*}>\widehat{h}$ follows from symmetry.\end{proof}

We illustrate the approximation of non-parallel flow fields by parallel flow fields with the example of \emph{video resampling} where we can consider a video $\mathcal{I}$ to be a curve on an IAM i.e., a video $\mathcal{I}=\{I_{t}, 0\leq t\leq T\}$ with $T>0$. This application is related to the problem of dynamic time warping (DTW) \cite{dtw1,dtw2,dtw3}, where one is interested in measuring the similarity between two sequences that vary in time or speed. As we shall see, this can be used for matching or aligning video sequences with a warped time axis \cite{dtw2,dtw3}.

Consider the IAM generated by imaging a black disk on an infinite white background starting with an initial velocity $v_{0}$ and accelerating with constant acceleration $a$ along a fixed direction. For instance, the disk can be thought of as undergoing freefall off an infinitely high cliff. The IAM is a 1D curve $c$ and homeomorphic to $\mathbb{R}^{+}$, the non-negative reals. Note that the curvature is everywhere zero since $r_{t}=\infty$. Given an arbitrary flow field $V_{t}$ along $c$, our goal is to analytically construct a parallel field $\tilde{V}_{t}$ such that $\tilde{V}_{t}$ is the unique parallel flow field that minimizes $\widetilde{E(h)}$. We first consider the video obtained by the action of $V_{t}$ i.e. $\mathcal{I}=\{V_{t}(c(t))\}$. Since $V_{t}$ is not parallel, the video will show the disk moving with non-uniform motion. Our goal is to make the video uniform, i.e., generate a new video $\tilde{\mathcal{I}}$ from $\tilde{V_{t}}$ that shows the disk moving with uniform motion.

From the physics of the problem, it is clear that $d_{c}(c(t),c(t+\delta_{t})) = K(v_{t}\delta_{t}+\frac{1}{2}a\delta^{2}_{t})$ for some positive constant $K$ and any time increment $\delta_{t}$ with $v_{t}$ being the velocity at time $t$. Now, since $\tilde{V}_{t}$ is to be a parallel flow field, we have that $h_{\tilde{V}}$ is a constant denoted by $h$. Thus, $d_{c}(\tilde{V}_{t}(c(t)),c(t))=d_{c}(c(t),c(t+\delta_{t}))=h$ so that $K(v_{t}\delta_{t}+\frac{1}{2}a\delta^{2}_{t}) = h$. Rearranging this equation, we arrive at 
\begin{equation*} \delta^{2}_{t}+\frac{2v_{t}}{a}\delta_{t}-\frac{2h}{aK}=0.\end{equation*}
Solving for $\delta_{t}$, we obtain two real roots $\delta^{1,2}_{t}=\frac{-v_{t}}{a}\pm\sqrt{(\frac{v_{t}}{a})^{2}+\frac{2h}{aK}}$ of which the (physically meaningful) positive root $\delta^{1}_{t}=\frac{-v_{t}}{a}+\sqrt{(\frac{v_{t}}{a})^{2}+\frac{2h}{aK}}>0$ is retained. Thus, by defining $\tilde{V}_{t}$ such that $\tilde{V}_{t}(c(t))=c(t+\delta^{1}_{t})$, we see that $\tilde{V}_{t}$ is the unique parallel flow field that minimizes $\widetilde{E(h)}$. The new video $\tilde{\mathcal{I}}=\{\tilde{V}_{t}(c(t))\}$ will thus show the disk moving with constant velocity. With this, we have effectively linearized the motion and made it independent of the acceleration of the disk.

\section{Multiscale Structure of Parallel Flow Fields}\label{sec:multiscale}
As indicated in the Section \ref{sec:difftools}, the set of parallel optical fields is a very special subset of the set of all optical fields along a fixed curve $c$. In this section, we will construct a monoid structure (i.e., a set with an associative operation and identity) on the set of all optical fields along a {fixed} curve $c$ and show that the class of {parallel fields} forms a submonoid of this set under some conditions on the curvature along the curve. Moreover, the monoid operation yields a multiscale structure on the set of parallel optical fields.

\subsection{Monoid structure on $\Omega(c)$}
As noted previously in Section \ref{sec:theory}, a clear disadvantage in dealing with the space of all optical fields along a curve is the lack of a linear, or more generally, any algebraic structure. In order to remedy this situation, we will define a binary operation on the set of optical fields that yields a monoid structure. We first fix a curve $c$ passing through $m_{1},m_{2}\in M$.

Recall that a generic $V_{t}\in \Omega(c)$ is characterized by its motion function $h_{V}(t)$. Thus, operations defined on motion functions $h_{V}(t)$ translate to operations on $V_{t}\in\Omega(c)$. With this in mind, we define for $V_{t},W_{t}\in \Omega(c) $ the sum $V_{t}+W_{t}$ to be the unique optical field with motion function 
\begin{equation*}
h_{V+W}(t) = \min(h_{V}(t)+h_{W}(t),r_{t}).
\end{equation*}
Since $h_{V+W}(t) \leq r_{t}$ for all $t$, we see that $h_{V+W}(t)$ corresponds to a unique flow field that we define to be $V_{t}+W_{t}$. This operation is clearly commutative. Note also that the trivial (parallel) field $Z_{t}$ defined to be the field that acts trivially on $c(t)$; i.e., \begin{equation*}
Z_{t}(c(t)) = c(t)
\label{ZeroElement}
\end{equation*}
is characterized by the motion function $h_{Z}(t)=0$ since $h_{Z}(t)=d_{c}(Z_{t}(c(t),c(t))=d_{c}(c(t),c(t))=0$. Moreover, for any $V_{t}\in\Omega(c)$, we have that 
\begin{equation*}
Z_{t}+V_{t} = V_{t}.
\end{equation*}
We see that $Z_{t}$ acts as the identity element in $\Omega(c)$. In addition to this, there is also the unique optical field $U_{t}$ characterized by \begin{equation*}h_{U}(t) = r_{t}\end{equation*} that satisfies 
\begin{equation*}
V_{t}+U_{t} = U_{t}
\end{equation*}
for all $V_{t}\in \Omega(c)$ and hence acts as the ``absorbing" element of $\Omega(c)$.

Clearly, we do not have ``inverses" with respect to ``+" in the sense that, given a generic $V_{t}\in \Omega(c)$, there does not exist a $W_{t}$ such that $V_{t}+W_{t}=Z_{t}$. However, we do have ``conjugates" with respect to ``+" in the following sense. Given any $V_{t}\in \Omega(c)$, there is a unique $V_{t}^{*}\in \Omega(c)$ such that 
\begin{equation*}
V_{t}+V_{t}^{*} = U_{t}.
\end{equation*}
$V_{t}^{*}$ is defined by its motion function 
\begin{equation*}
h_{V^{*}}(t) = r_{t}-h_{V}(t).
\end{equation*}
We refer to $V^{*}_{t}$ as the conjugate of $V_{t}$. Moreover, 
\begin{equation*}
(V_{t}^{*})^{*} = V_{t}.
\end{equation*} 
Finally, we note that $U_{t}$ and $Z_{t}$ are conjugates.

\begin{proposition}$\Omega(c)$ is a monoid under the operation ``+" defined in  (22).\end{proposition}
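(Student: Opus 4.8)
The plan is to verify the three monoid axioms directly — closure, associativity, and the existence of a two-sided identity — by exploiting the fact established in Theorem 10 that each optical field $V_t \in \Omega(c)$ is \emph{completely characterized} by its motion function $h_V(t)$, a non-negative function bounded pointwise by $r_t$. This characterization reduces all claims about the operation ``$+$'' on fields to claims about the associated operation on motion functions, namely $h_{V+W}(t) = \min(h_V(t)+h_W(t), r_t)$. So the strategy is to transport everything to the level of real-valued functions, where the verifications are elementary.

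First I would establish \emph{closure}: given $V_t, W_t \in \Omega(c)$, the function $h_{V+W}(t) = \min(h_V(t)+h_W(t),r_t)$ is non-negative (a minimum of non-negative quantities) and satisfies $h_{V+W}(t) \leq r_t$ pointwise by construction. Hence by Theorem 10 there is a unique optical field in $\Omega(c)$ with this motion function, which is precisely $V_t + W_t$; this shows the operation is well-defined and lands back in $\Omega(c)$. Next I would verify the \emph{identity}: the excerpt already notes that $Z_t$ (with $h_Z(t)=0$) satisfies $Z_t + V_t = V_t$, which follows since $\min(0 + h_V(t), r_t) = \min(h_V(t),r_t) = h_V(t)$, using $h_V(t) \leq r_t$; commutativity of the operation then gives $V_t + Z_t = V_t$ as well, so $Z_t$ is a genuine two-sided identity.

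The real content is \emph{associativity}. I would reduce $(U+V)+W = U+(V+W)$ to the pointwise identity
\begin{equation*}
\min\bigl(\min(h_U+h_V, r_t)+h_W,\, r_t\bigr) = \min\bigl(h_U + \min(h_V+h_W, r_t),\, r_t\bigr),
\end{equation*}
and then argue that both sides equal $\min(h_U(t)+h_V(t)+h_W(t),\, r_t)$. This is the standard ``truncated addition'' (bounded-sum) identity: capping a sum at $r_t$ commutes with the order in which one accumulates and re-caps, because once any partial sum meets or exceeds $r_t$ the outer $\min$ clamps the result to $r_t$ regardless. Since the two fields $(U+V)+W$ and $U+(V+W)$ have identical motion functions, the uniqueness clause of Theorem 10 forces them to be the same field.

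The main obstacle I anticipate is purely the associativity bookkeeping: the nested $\min$ with a fixed ceiling $r_t$ requires a small case analysis (according to whether $h_U+h_V \geq r_t$, and whether the total $h_U+h_V+h_W \geq r_t$) to confirm rigorously that both nestings collapse to the single clamped sum. This is routine but must be handled cleanly for every $t$, since $r_t$ may vary along $c$ when the curvature is not constant. Everything else — closure, identity, and commutativity — follows immediately from the definition and from Theorem 10, so the proof is short once the truncated-sum identity is dispatched.
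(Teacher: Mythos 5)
Your proposal is correct and follows essentially the same route as the paper: both reduce associativity to a pointwise identity of motion functions via the characterization theorem and settle it by a case analysis on whether partial sums reach the ceiling $r_t$, with closure and the identity $Z_t$ handled as immediate consequences of the definition. Your formulation is slightly cleaner in that you identify both nestings with the single clamped sum $\min(h_1(t)+h_2(t)+h_3(t),\,r_t)$, whereas the paper argues case by case that both sides collapse to $U_t$ or to the untruncated sum, but the underlying argument is the same.
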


\begin{proof}
We only verify associativity, since $Z_{t}$ provides the identity. Given $V_{t},X_{t},Y_{t}\in\Omega(c)$, we consider the sums $(V_{t}+X_{t})+Y_{t}$ and $V_{t}+(X_{t}+Y_{t})$. If $h_{V}(t),h_{X}(t),h_{Y}(t)$ are such that $h_{V}(t)+h_{X}(t)+h_{Y}(t)< r_{t}$ then both sides are clearly equal. If on the other hand, $h_{V}(t)+h_{X}(t)+h_{Y}(t)\geq r_{t}$ then we must have that either the sum $h_{V}(t)+h_{X}(t)+h_{Y}(t)$ taken two factors at a time exceeds $r_{t}$ or that the combined sum of all three factors exceeds $r_{t}$ with the sum of no two factors exceeding $r_{t}$. In the first case, we assume that $h_{V}(t)+h_{X}(t)\geq r_{t}$, which implies the sum
\begin{equation*}
(V_{t}+X_{t})+Y_{t}=U_{t}
\end{equation*}
with $(X_{t}+Y_{t})\neq U_{t}$, i.e., 
\begin{equation*}
h_{X}(t)+h_{Y}(t)<r_{t}.
\end{equation*}
However, $V_{t}+(X_{t}+Y_{t})$ is characterized by 
\begin{equation*}
\min(h_{V}(t)+\min(h_{X}(t)+h_{Y}(t),r_{t}),r_{t})= \min(h_{V}(t)+h_{X}(t)+h_{Y}(t),r_{t}) = r_{t}
\end{equation*} 
which shows that 
\begin{equation*}
V_{t}+(X_{t}+Y_{t})=U_{t}
\end{equation*}
as well. The other cases follow by similar arguments.\end{proof}

\subsection{Parallel fields along curves of constant curvature}
The addition operation defined above restricts to the set $\omega(c)$ of parallel fields along $c(t)$. However, for generic $c(t)$, the sum $V_{t}+W_{t}$ of two parallel fields $V_{t},W_{t}\in \omega(c)$ may result in a possibly non-parallel field. For instance,
consider $h_{V}(t),h_{W}(t)$ such that $h_{V}(t_{0})+h_{W}(t_{0}) < r_{t_{0}}$ but $h_{V}(t_{1})+h_{W}(t_{1}) \geq r_{t_{1}}$ for some $t_{0},t_{1}$.
Clearly, the sum is not parallel since in the first case, $h_{V+W}(t_{0})=h_{V}(t_{0})+h_{W}(t_{0}) < r_{t_{0}}$ while $h_{V+W}(t_{1})=r_{t_{1}}$ so that $h_{V+W}(t)$ is not constant.

However, that if $r_{t}$ is constant along $c$, i.e., the flow radius is constant along the path, then the above situation is vacuous. Since $K_{t} = \frac{1}{r_{t}}$ is the curvature along the curve $c$, we are essentially requiring the curvature to be constant along $c$. We formally record the above observation.

\begin{proposition}
Let $c$ be a curve with constant curvature $K_{t}$. Then, the operation ``+" restricted to $\omega(c)$ is well defined  and $\omega(c)$ is a submonoid of $\Omega(c)$.
\end{proposition}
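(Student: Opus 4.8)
The plan is to exploit the fact that, for fields in $\omega(c)$, the motion functions are constant in $t$, and that constant curvature forces $r_{t}$ to be constant as well; together these guarantee that the clipping $\min(\cdot, r_{t})$ appearing in the definition of ``+'' preserves constancy. First I would translate the hypothesis into its usable form: since $K_{t} = \frac{1}{r_{t}}$ is constant along $c$, the flow radius is a fixed positive constant $r_{t}\equiv r$ independent of $t$.

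Next I would establish the well-definedness claim, which amounts to closure of $\omega(c)$ under ``+''. Take $V_{t}, W_{t}\in\omega(c)$; by definition their motion functions are constants $h_{V}$ and $h_{W}$. Substituting into the defining relation $h_{V+W}(t) = \min(h_{V}(t)+h_{W}(t), r_{t})$ gives $h_{V+W}(t) = \min(h_{V}+h_{W}, r)$, which is manifestly independent of $t$ and hence constant. By the characterization theorem (Theorem 10), this constant function determines a unique optical field, and because its motion function is constant, that field is parallel. Therefore $V_{t}+W_{t}\in\omega(c)$, so the restricted operation does not leave $\omega(c)$. This is exactly the step that fails for non-constant $r_{t}$, as the example immediately preceding the proposition shows, where the sum equals $h_{V}+h_{W}$ at one point but saturates at $r_{t}$ at another; thus closure is the conceptual crux even though the computation itself is light.

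I would then dispatch the remaining submonoid axioms. The identity $Z_{t}$ of $\Omega(c)$ has motion function $h_{Z}(t)=0$, a constant, so $Z_{t}\in\omega(c)$ and continues to act as the identity for the restricted operation. Associativity is inherited directly from the ambient monoid structure on $\Omega(c)$ (the earlier proposition), since the restriction of an associative operation to a subset closed under that operation is again associative. Combining closure, the presence of the identity, and inherited associativity yields that $\omega(c)$ is a submonoid of $\Omega(c)$.

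I do not anticipate a serious obstacle; the only genuine subtlety is recognizing that the threshold $r_{t}$ is the sole source of potential non-parallelism in a sum, and that constant curvature removes it. For additional clarity I would, if desired, make the structure fully explicit by noting that sending each parallel field to its constant motion value embeds $\omega(c)$ into $([0,r],\oplus)$ with $a\oplus b = \min(a+b, r)$, which exhibits $\omega(c)$ as a commutative submonoid directly.
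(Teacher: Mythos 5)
Your proof is correct and follows essentially the same route as the paper: constant curvature gives a constant flow radius $r$, so $h_{V+W}=\min(h_V+h_W,r)$ is constant, giving closure, with the identity $Z_t$ and associativity inherited from $\Omega(c)$. The only cosmetic difference is that the paper splits into the cases $h_V+h_W<r$ and $h_V+h_W\geq r$, whereas you observe directly that the clipped sum is manifestly $t$-independent; both arguments are the same in substance.
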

\begin{proof}
We need only verify closure of ``+" in $\omega(c)$ since $Z_{t}\in\omega(c)$. If $V_{t}\in \omega(c)$, then $h_{V}(t)$ is a constant and we will therefore suppress the argument $t$ in $h_{V}(t)$. Note that since $c$ is of constant curvature, $r_{t}$ is a constant $r>0$. Thus, if $V_{t},W_{t}\in \omega(c)$, then $h_{V+W} = \min(h_{V}+h_{W},r)$. If $h_{V}+h_{W}< r$ then as $h_{V}$ and $h_{W}$ are both constant so is their sum $h_{V}+h_{W}$ and 
\begin{equation*}
h_{V+W}=h_{V}+h_{W}.
\end{equation*} 
If $h_{V}+h_{W}\geq r$, then 
\begin{equation*}
h_{V+W}=\min(h_{V}+h_{W},r)=r.
\end{equation*} 
In either case, $h_{V+W}$ is constant and hence
\begin{equation*}
V_{t}+W_{t}\in \omega(c).
\end{equation*}
\end{proof}
Thus, curves $c$ that have constant curvature at all points are very special; not only is parallel transport determined by a single point on the curve, but $\omega(c)$ is also a submonoid of $\Omega(c)$.

\subsection{Multiscale structure of $\omega(c)$}
In this section, we look for finite submonoids of $\omega(c)$. We would like the submonoids to be canonically defined,
 by which we mean that they depend only on the geometry of the constant curvature curve $c$. In particular, we construct for each positive integer $k$, a collection of finite submonoids $\mathcal{V}_{n,k}$ of $\omega(c)$ with $n$ a non-negative integer and $c$ a constant curvature path. Moreover, we will see that this collection is naturally nested, i.e., \begin{center}$\mathcal{V}_{0,k}\subset\mathcal{V}_{1,k}\subset \cdots$\end{center}In essence, this provides a multi-scale view of $\omega(c)$.

Since $\omega(c)$ consists of parallel optical fields, we will simply denote the motion function $h_{V}(t)$ of $V_{t}\in \omega(c)$ as $h_{V}$. Moreover, we will suppress the subscript $t$ when referring to elements of $V_{t}\in\omega(c)$. Also, for $V\in\omega(c)$, we mean by $\frac{V}{k}$ the element in $\omega(c)$ with motion function being the constant $\frac{h_{V}}{k}$. Recall also that $\omega(c)$ possesses two canonical elements $Z,U\in\omega(c)\cap\Omega(c)$ that act as the trivial and absorbing elements of $\omega(c)$ respectively.

We begin now with the construction of $\mathcal{V}_{n,k}$. Fix a positive integer $k$ and set \begin{center}$\mathcal{V}_{0,k}=\{Z,U\}$.\end{center} Next, we inductively set \begin{center}$\mathcal{V}_{n,k} = \left\{ Z,\frac{U}{k^{n}},\frac{2U}{k^{n}},...\frac{(k^{n}-1)U}{k^{n}},U \right\}$.\end{center} Now, it is clear that \begin{center}$\mathcal{V}_{0,k}\subset\mathcal{V}_{1,k}\subset...$.\end{center} Moreover, by the constant curvature condition, the sum of any two elements in $\mathcal{V}_{n,k}$ remains in $\mathcal{V}_{n,k}$ while associativity is obtained from the corresponding property in $\omega(c)$. Finally, since $Z\in\mathcal{V}_{n,k}$ for all $n,k$, we conclude that each $\mathcal{V}_{n,k}$ is a finite submonoid of $\omega(c)$.

We have thus obtained a sequence of finite submonoids of $\omega(c)$. Moreover, with increasing $n$, it is clear that an arbitrary $V\in\omega(c)$ can be uniformly approximated by $\widetilde{V}\in\mathcal{V}_{n,k}$ in the sense that $|h_{V}-h_{\widetilde{V}}|$ can be made arbitrarily small by choosing larger $n$. In other words, the sequence $\mathcal{V}_{n,k}$ is ``dense" in $\omega(c)$. Finally, since the basic generators of $\mathcal{V}_{n,k}$ are $Z,U \in\omega(c)\cap\Omega(c)$, this construction is canonical in the sense that it depends only on the set of $\{Z,U\}$, which are in turn are characterized by the global geometry of the curve $c$.

In Section \ref{sec:difftools}, we saw that with curvature conditions on a curve $c$, it is possible to approximate an arbitrary flow field by a parallel one. The results presented above show that one can further approximate a parallel flow field with a finite collection of ``template'' elements from $\omega(c)$. This is very much similar to the multiscale representation that wavelets provide for natural images. Indeed, the multiscale structure inherent to monoids paves the way for ``lossy compression" of arbitrary flow fields by storing only the relevant scales $\mathcal{V}_{n,k}$. In addition to compression, the multiscale structure can potentially enable
fast computations on the flow fields. Operations on the flow fields can equivalently be mapped to those  on the $\mathcal{V}_{n,k}$ without loss in accuracy while gaining significantly in the number of computations required.

\section{Discussion}\label{sec:discuss}
In this paper, we have developed the mathematical foundations of optical flow-based transport operators for image manifolds, which were introduced empirically \cite{CVPROFM}.
Our main theoretical contribution was the development of the flow metric for using the ambient metric on the OFMs. Using the flow metric, we derived differential geometric analogues of tangent bundles, vector fields, parallel transport, curvature etc. When the IAM is generated by Lie group parameters, we showed that the OFM framework includes previous algebraic methods as a special case. Moreover, since the flow neighborhood at each point is the entire IAM, we can obtain geodesics between any two images using flow operators.
While this paper has focused on explaining and extending the results obtained in \cite{CVPROFM}, we envision that the theory could make an on a large class of applications involving image manifolds.  

A clear assumption in our analysis has been that optical flow is the transport operator of choice for IAMs. While this is true for a majority of IAMs generated by motion-induced parameter changes such as translations, rotations and unstructured plastic deformations, there are classes of IAMs for which the choice of transport operator is not immediately clear. For instance, for illumination manifolds obtained by variations in the illumination of an object, optical flow may not be the transport operator of choice, since such manifolds do not in general obey the brightness constancy requirement needed in optical flow computations. Moreover, in cases where there is significant self-occlusion during the imaging process, optical flow may not be a practical transport operator. However, by regularizing the optical flow computation to handle occlusions by removing flow operators that lead to undefined motion between pixels, one can partially circumvent this issue, and the methods of this paper can be profitably applied. However, these are aspects of a more computational nature, and we reserve them for an alternate forum.

A number of avenues for future work exist. First, our work hints that it should be possible to  develop additional geometric tools such as affine connections, holonomy, etc.\ using flow operators. These will complete the IAM analysis toolbox. Second, although our development has been specific to the case of IAMs with optical flow, the basic model is extensible to a wide variety of signal ensembles with appropriately defined transport operators. For instance, a manifold model for speech signals has been proposed in \cite{speechmanifold} where appropriate transport operators and the analog of OFMs may involve a frequency domain approach. Once the ``right" transport operator has been identified for the application in hand, one can conceivably define and study metrics similar to our flow metric and thereby develop analytic tools for further analysis. 

\section*{Acknowledgments}
The authors thank Chinmay Hegde for valuable discussions and insightful comments on the manuscript.

This work was partially supported by the grants NSF CCF-0431150, CCF-0728867, CCF-0926127, CCF-1117939, ARO MURI W911NF-09-1-0383, W911NF-07-1-0185, DARPA N66001-11-1-4090, N66001-11-C-4092, N66001-08-1-2065, AFOSR FA9550-09-1-0432, and LLNL B593154.

\appendix

\section{Smoothness and Distance Properties of OFMs} \label{sec:smooth}
In this appendix, we establish the smoothness and isometry properties of OFMs associated with two interesting classes of IAMs: the affine articulation manifold and the pose manifold.

\subsection{Affine articulation manifold}
Affine articulations are parameterized by  a $6$-dimensional articulation space $\Theta = \reals^6$;
each articulation $\theta$ can be written as $\theta = (A, \bft)$, with  $A \in \reals^{2 \times 2}$ and ${\bf t} \in \reals^2$. Any image belonging to the IAM can thus be written as
\begin{equation}
I_1(\bfx) = I_\rf \left( (A+{\mathbb I})\bfx + \bft \right),
\label{eqn:trn}
\end{equation}
where $\bfx = (x, y) $ is defined over the domain ${\cal X} = [0,1] \times [0,1]$ and where $I_\rf$ is a reference image.\footnote{Boundary-related issues are always an concern when we define images over a finite domain. Here, we circumvent this by assuming that the regions of interest are surrounded by a field of zeros. This allows us to assign undefined values to zero and satisfy  (\ref{eqn:trn})}
The optical flow field $f_\theta$ associated with the transport operator can be written as 
$f_\theta(\bfx) = f_{A, \bft}(\bfx) = A \bfx + \bft.$
Now, recall that the OFM at $I_\rf$ is defined as
\begin{equation*}
O_{I_\rf} = \{ f_\theta: \;\; \theta = (A, \bft), A \in \reals^{2 \times 2}, \bft \in \reals^2 \}.
\end{equation*}
The linear dependence of the optical flow field on both $A$ and $\bft$ implies that the OFM, which is the collection of optical flows at $I_\rf$, is infinitely smooth. Finally, noting that the OFM is independent of the reference image, we can establish the following result.

\begin{appxlem} \label{lem:affsmooth}
For affine articulations, the OFM at any reference image is infinitely smooth.
\end{appxlem}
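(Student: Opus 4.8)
The plan is to exploit the explicit, globally-defined parameterization of the affine OFM and to observe that it is \emph{linear} in the articulation parameters. Concretely, I would introduce the map $\Phi : \reals^6 \to L^2([0,1]^2) \times L^2([0,1]^2)$ sending $\theta = (A, \bft)$ to the flow field $f_\theta(\bfx) = A\bfx + \bft$; writing $A = (a_{ij})$ and $\bft = (t_1, t_2)$, the two components are $\bfv_x(x,y) = a_{11}x + a_{12}y + t_1$ and $\bfv_y(x,y) = a_{21}x + a_{22}y + t_2$, which are affine in $(x,y)$ and hence lie in $L^2$ on the compact domain $[0,1]^2$. The image of $\Phi$ is exactly $O_{I_\rf}$.

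First I would check that $\Phi$ is a linear isomorphism onto its image. Linearity in $\theta$ is immediate from the formula, and injectivity follows from the linear independence of the monomials $1, x, y$ on $[0,1]^2$: if $f_\theta = f_{\theta'}$, then matching the constant, $x$, and $y$ coefficients forces $\theta = \theta'$. Thus $O_{I_\rf}$ is a six-dimensional linear subspace of the flow-field space, and $\Phi^{-1}$ restricted to this image provides a single global chart identifying $O_{I_\rf}$ with $\reals^6$.

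Smoothness then follows almost immediately: a linear map between vector spaces is $C^\infty$, so both $\Phi$ and $\Phi^{-1}$ are smooth, and the OFM is covered by the single chart $\Phi^{-1}$. With a one-chart atlas there are no nontrivial transition maps to verify, so the manifold is trivially infinitely smooth; equivalently, a finite-dimensional linear subspace of a vector space is an infinitely smooth embedded submanifold. Finally, since $f_\theta(\bfx) = A\bfx + \bft$ makes no reference to $I_\rf$, the OFM (as a set of flow fields) and its global chart are identical for every choice of reference image, which establishes the claim for any reference.

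The main point requiring care --- rather than a genuine obstacle --- is reconciling this natural linear smooth structure with the abstract smooth structure placed on a generic OFM earlier, which was pulled back from $B_m$ through the chart $\Phi_m \circ g^{-1}$. Because $O_{I_\rf}$ admits a single global chart, its smooth structure is unambiguous, so this compatibility is automatic. One should also invoke the boundary convention from the footnote following (\ref{eqn:trn}): extending images by zero outside $[0,1]^2$ is precisely what lets $f_\theta$ remain a well-defined flow field for all $\theta$, so that $\Phi$ is genuinely defined on all of $\reals^6$.
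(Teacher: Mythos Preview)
Your proposal is correct and follows essentially the same approach as the paper: the paper simply observes that the flow field $f_\theta(\bfx)=A\bfx+\bft$ depends linearly on $(A,\bft)$, which immediately gives infinite smoothness, and then notes that this is independent of the reference image. Your write-up is a more careful and rigorous version of exactly this argument, supplying the injectivity check and the single-chart reasoning that the paper leaves implicit.
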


Next, we consider distances between optical flows and establish that the OFM is isometric.

\begin{appxlem}
For affine articulations, the OFM at any reference image is globally isometric.
\end{appxlem}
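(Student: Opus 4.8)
The plan is to exploit the \emph{linearity} of the parametrization $\theta \mapsto f_\theta$ to show that the $L^2$ metric $d_O$ on the OFM pulls back to a fixed, base-point-independent quadratic form on the parameter space $\Theta = \reals^6$, which is precisely what global isometry amounts to. First I would write the OFM metric between two flows $f_{\theta_1}, f_{\theta_2}$ explicitly as
\begin{equation*}
d_O(f_{\theta_1},f_{\theta_2})^2 = \int_{\cal X} \| f_{\theta_1}(\bfx) - f_{\theta_2}(\bfx) \|^2 \, d\bfx,
\end{equation*}
and substitute $f_{\theta_i}(\bfx) = A_i \bfx + \bft_i$. Writing $B = A_1 - A_2$ and $\bfe = \bft_1 - \bft_2$, the integrand becomes $\| B\bfx + \bfe\|^2$, which depends on $\theta_1,\theta_2$ only through the difference $\theta_1 - \theta_2 = (B,\bfe)$. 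This translation invariance --- a direct consequence of the flow field being affine in $\bfx$ and linear in $\theta$ --- is the first key observation, and it is what will let the local isometry extend globally.

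Next I would expand the integral into moments of the domain ${\cal X} = [0,1]^2$. Using the centroid $\bar\bfx = \int_{\cal X} \bfx\, d\bfx$, the second-moment matrix $S = \int_{\cal X} \bfx\bfx^\top d\bfx$, and $\int_{\cal X} d\bfx = 1$, one obtains
\begin{equation*}
d_O(f_{\theta_1},f_{\theta_2})^2 = \mathrm{tr}(B^\top B\, S) + 2\,\bfe^\top B\,\bar\bfx + \|\bfe\|^2,
\end{equation*}
which is a fixed quadratic form $(\theta_1-\theta_2)^\top Q (\theta_1-\theta_2)$ in the flattened parameter difference, with $Q$ a $6\times 6$ matrix assembled only from the moments of ${\cal X}$. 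The crucial point is that $Q$ depends neither on $\theta_1,\theta_2$ nor on the reference image $I_\rf$ (the latter having already dropped out when the OFM was shown to be independent of $I_\rf$); this constancy over all of $\Theta$ is exactly what upgrades a merely local isometry to a global one.

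Finally I would argue that $Q$ is positive definite, so that $d_\Theta(\theta_1,\theta_2) := \sqrt{(\theta_1-\theta_2)^\top Q(\theta_1-\theta_2)}$ is a genuine flat, translation-invariant metric on $\Theta$ for which $d_O = d_\Theta$ identically. Positive definiteness follows because $(\theta_1-\theta_2)^\top Q (\theta_1-\theta_2) = \int_{\cal X}\|B\bfx+\bfe\|^2 d\bfx$ vanishes only if $B\bfx + \bfe = 0$ almost everywhere on $[0,1]^2$, and the linear independence of the coordinate functions $1, x, y$ forces $B = 0$ and $\bfe = 0$. I expect this nondegeneracy step to be the main obstacle worth spelling out: it is the only place where the geometry of the domain enters essentially, and it is what guarantees that the OFM is a faithful (injective) copy of the parameter space rather than a degenerate quotient. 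With $Q \succ 0$ established, $(O_{I_\rf}, d_O)$ is globally isometric to $(\reals^6, Q)$, and hence to the parameter space under $d_\Theta$, completing the proof.
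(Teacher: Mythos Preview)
Your proposal is correct and follows essentially the same approach as the paper: both compute the $L^2$ distance between flows, observe that it reduces to a fixed quadratic form $(\theta_1-\theta_2)^\top Q (\theta_1-\theta_2)$ in the parameter difference (the paper writes this via the moment matrix $\Sigma = \int_{\cal X} \left[\begin{smallmatrix}\bfx\\1\end{smallmatrix}\right]\left[\bfx^\top\;1\right]\,d\bfx$), and conclude global isometry from positive definiteness. Your treatment is in fact slightly more careful than the paper's, since you spell out the nondegeneracy argument via the linear independence of $1,x,y$ on $[0,1]^2$, whereas the paper simply asserts that $\Sigma$ is full rank.
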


\begin{proof} 
Given two articulations $\theta_1 = (A_1, \bft_1)$ and $\theta_2 = (A_2, \bft_2)$, the Euclidean distance between them is given by
\begin{eqnarray*}
\left(d(f_{\theta_1}, f_{\theta_2})\right)^2 &=& { \int_{\bfx \in {\cal X}} \| f_{\theta_1}(\bfx) - f_{\theta_2} (\bfx) \|_2^2 \, d\bfx } \\
&=& \int_{\bfx \in {\cal X}} \| (A_1 - A_2)\bfx + (\bft_1-\bft_2) \|_2^2  \, d\bfx \\
&=&  (\theta_1 - \theta_2)^T \Sigma (\theta_1 - \theta_2),
\end{eqnarray*}
where
$$\Sigma = \int_{\bfx \in {\cal X}} \left(\left[ \begin{array}{c} \bfx \\ 1 \end{array} \right]
\left[  \bfx^T \;\; 1  \right] \right) d\bfx.$$
For ${\cal X} = [0,\,1]\times[0,\,1]$, it is easily shown that $\Sigma$ is full-rank and positive definite.
Hence, we have
$$d(f_{\theta_1}, f_{\theta_2}) = \| \theta_1 - \theta_2 \|_\Sigma,$$
where $\| \cdot \|_\Sigma$ is the Mahalanobis (or weighted Euclidean) distance defined using the matrix $\Sigma$.
This implies that the OFM is Euclidean and, hence, globally isometric.
\end{proof}

\subsection{Pose manifold}
The pose manifold  is the IAM corresponding to the motion of a camera observing a static scene. It is well-known that the articulation space is 6-dimensional, with 3 degrees of rotation and 3 degrees of translation of the camera, i.e., $\Theta = SO(3) \times \reals^3$.
We assume that the optical flow is the (unique) 2-dimensional projection of the motion flow of the scene induced due to the motion of the camera. 

Without loss of generality, we assume the reference articulation $\theta_\rf = (R_0, \bft_0) = ( {\mathbb I}, {\bf 0})$ and that the camera's internal calibration is known and accounted for \cite{hartley2000multiple}.
At the reference articulation $\theta_\rf$, let the depth at a pixel $\bfx$ be given by $\lambda_\rf(\bfx)$. Under an articulation $\theta = (R_\theta, \bft_\theta)$, the optical flow observed at the pixel $\bfx$ is given by
\begin{equation}
f(\theta) (\bfx)  = P \left( \lambda_\rf(\bfx) R_\theta \left[ \begin{array}{c} \bfx \\ 1 \end{array}\right] + \bft \right) - P \left( \lambda_\rf(\bfx) \left[ \begin{array}{c} \bfx \\ 1 \end{array}\right]  \right)
\label{eqn:poseFlow}
\end{equation}
where $P(\cdot)$ is a projection operator such that $P( x, y, z ) = (x/z, y/z)$. Note that $P$ is a well-defined and {\em infinitely smooth function} provided $z \ne 0$. 

The OFM $O$ is the image of the map $f(\theta) = \{ f(\theta)(\bfx): \; \; \bfx \in {\cal X} \},$ where ${\cal X}$ is the image plane. We now establish the smoothness of the OFM.

\begin{appxlem} \label{lem:lem1}
Consider the OFM corresponding to a pose manifold of a scene with depth map (at the reference articulation $\theta_\rf$) strictly bounded away from zero, i.e, $\forall \: \bfx, \lambda_\rf(\bfx) > \lambda > 0$. Then there exists a neighborhood of $\theta_\rf$ where $f(\theta) = \{ f(\theta)(\bfx); \; \; \bfx \in {\cal X} \}$ is an infinitely smooth map.
\end{appxlem}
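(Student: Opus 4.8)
The plan is to exploit the fact, noted just before the statement, that the projection $P(x,y,z) = (x/z,\,y/z)$ is infinitely smooth on the open set $\{z \neq 0\}$, and to exhibit $f(\theta)(\bfx)$ as a composition of $P$ with a map that depends smoothly on $\theta$. Smoothness of the composition is then automatic \emph{provided} we can guarantee that the argument fed into $P$ never meets the singular set $\{z=0\}$, uniformly over all pixels $\bfx \in \cal X$. Thus the proof splits into a routine smoothness bookkeeping and one genuine estimate, the uniform positivity of a denominator.

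First I would record the smooth dependence of the ingredients on $\theta$. Writing $\theta = (R_\theta, \bft)$ and abbreviating by $\bar\bfx = (x,y,1)^T$ the homogeneous coordinate of $\bfx$, the map $\theta \mapsto R_\theta$ into $SO(3) \subset \reals^{3\times 3}$ is infinitely smooth (for instance via the exponential parametrization of the rotation group), and $\theta \mapsto \bft$ is linear, hence smooth. Consequently, for each fixed $\bfx$, the inner map $g(\theta,\bfx) = \lambda_\rf(\bfx)\, R_\theta\, \bar\bfx + \bft \in \reals^3$ is a smooth function of $\theta$. The second term $P(\lambda_\rf(\bfx)\,\bar\bfx)$ is independent of $\theta$ (in fact it equals $\bfx$), so it contributes only a constant offset in $\theta$ and cannot disturb smoothness.

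Second, and this is the crux, I would establish that the third coordinate $z(\theta,\bfx)$ of $g(\theta,\bfx)$ is bounded away from zero on some neighborhood $U$ of $\theta_\rf$, uniformly in $\bfx$. At the reference articulation $R_0 = {\mathbb I}$, $\bft = {\bf 0}$, the third coordinate is $z(\theta_\rf,\bfx) = \lambda_\rf(\bfx) > \lambda > 0$. A triangle-inequality estimate gives $|z(\theta,\bfx) - \lambda_\rf(\bfx)| \le \lambda_\rf(\bfx)\,\|R_\theta - {\mathbb I}\|\,\|\bar\bfx\| + |\bft_3|$, where $\bft_3$ is the third component of $\bft$. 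Since $\cal X = [0,1]\times[0,1]$ is compact, both $\lambda_\rf(\bfx)$ and $\|\bar\bfx\|$ are uniformly bounded on $\cal X$; hence by shrinking $U$ so that $\|R_\theta - {\mathbb I}\|$ and $|\bft_3|$ are small, the right-hand side can be forced below $\lambda/2$ for \emph{all} $\bfx$ simultaneously. Therefore $z(\theta,\bfx) > \lambda/2 > 0$ for every $\theta \in U$ and every $\bfx \in \cal X$.

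Finally, since $g(\theta,\bfx)$ lands in the open set $\{z \neq 0\}$ for all $\theta \in U$ and $P$ is $C^\infty$ there, the composition $\theta \mapsto P(g(\theta,\bfx))$ is $C^\infty$ on $U$, with derivatives bounded uniformly in $\bfx$ thanks to the uniform lower bound on $z$; subtracting the $\theta$-independent term preserves smoothness. This shows that $\theta \mapsto f(\theta)$ is infinitely smooth on $U$ (including as a map into the space of flow fields, by the uniform control of the $\bfx$-derivatives), as claimed. The main obstacle is precisely this uniform-over-$\bfx$ positivity of the denominator: the hypothesis $\lambda_\rf(\bfx) > \lambda > 0$ together with the compactness of $\cal X$ is exactly what prevents the perspective projection from degenerating somewhere on the image plane, which would otherwise break the smoothness of the OFM parametrization.
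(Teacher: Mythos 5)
Your proposal is correct and follows essentially the same route as the paper's proof: express $f(\theta)(\bfx)$ as the composition of the projection $P$ (smooth away from $z=0$) with a map smooth in $\theta$, and observe that the depth bound $\lambda_\rf(\bfx) > \lambda > 0$ guarantees a neighborhood of $\theta_\rf$ on which the third coordinate stays bounded away from zero. The only difference is that you actually prove, via the triangle inequality and compactness of ${\cal X}$, the uniform positivity of the denominator that the paper simply asserts, which makes your write-up a more rigorous version of the same argument.
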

\begin{proof}
The smoothness of $f(\theta)$ follows from the smoothness of $f(\theta)(\bfx)$, which in turn follows from the smoothness of the projection operator $P$. Note that $P(x,y,z) = (x/z, y/z)$ is well defined and infinitely smooth only if $z \ne 0$. When all points in the scene have depth bounded away from zero, then there is guaranteed a neighborhood of $\Theta$ around $\theta_\rf$ wherein all points continue to have depth bounded away from zero. This ensures that the projection of all points is well defined, and hence, $f(\theta)$ is infinitely smooth in this neighborhood.
\end{proof}

Note that, in contrast to affine articulations, where the corresponding OFM is smooth globally,
the OFM associated with the pose manifold is smooth only over a neighborhood of the reference point. 

Next, we consider distances between optical flows and establish that the OFM is locally isometric. 
We begin by defining the following representation for rotation matrices.
Let ${\bf \omega} = (\omega_x, \omega_y, \omega_z) \in \reals^3$, and let 
$\Omega_\omega \in \reals^{3 \times 3}$ be the skew-symmetric matrix defined as 
$$\Omega_\omega = \left[
\begin{array}{ccc} 0 & -\omega_x & \omega_y \\ \omega_x & 0 & -\omega_z \\ -\omega_y & \omega_z & 0 \end{array} \right].$$
Noting that the matrix exponential $e^{\Omega_\omega}$ is a rotation matrix, we define our articulations as
$\theta = (\omega, \bft) \in \reals^6$, with $\bft = (t_x, t_y, t_z)$.

Before we state a formal result, we need to introduce two assumptions that will help in  simplifying the derivation.
First, we assume that the imaging model is well-approximated by a weak perspective model \cite{hartley2000multiple}. In the weak perspective model, it is assumed that the variations in the scene depth are significantly smaller than the average depth of the scene. In such a case, the projection map $P(\bfx)=(x/z, y/z)$ can be well-approximated by 
$$P_{\rm wp}(\bfx) = (x/z_{\rm av}, y/z_{\rm av}),$$ where $z_{\rm av}$ is the average scene depth. As a consequence, translations along the $z$-axis are unobservable \cite{hartley2000multiple}; hence, we restrict the articulation space to the rotation of the camera and translation along $x$ and $y$ axes alone.
Second, we assume that the rotation undergone by the camera is small. 
This enables us to approximate the matrix-exponential $e^{\Omega_\omega} \approx {\mathbb I}+\Omega_\omega$. As a consequence of this assumption, we only obtain a local isometry result.
With these two assumptions, we are ready to state a result on the distances between optical flows.

\begin{appxlem}
Consider the pose manifold under the assumption of the weak perspective imaging model.
Then the OFM 
is locally isometric.
\end{appxlem}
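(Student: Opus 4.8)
The plan is to follow the same route as the preceding (affine) isometry lemma: substitute the two simplifying assumptions into the pose-flow formula~(\ref{eqn:poseFlow}), show that the resulting approximate flow is an \emph{affine-linear} function of the articulation $\theta$, and then read off the $L^2$ distance as a quadratic form whose Gram matrix is the sought metric. Concretely, I would first insert the small-rotation approximation $e^{\Omega_\omega}\approx {\mathbb I}+\Omega_\omega$ and the weak-perspective projection $P_{\rm wp}(X,Y,Z)=(X/z_{\rm av},Y/z_{\rm av})$ into~(\ref{eqn:poseFlow}). Writing $\tilde{\bfx}=[\bfx;1]$ and $\lambda=\lambda_\rf(\bfx)$, the perturbed scene point is $\lambda(\tilde{\bfx}+\Omega_\omega\tilde{\bfx})+\bft$, and since $P_{\rm wp}$ divides only by the fixed depth $z_{\rm av}$, the reference term cancels the unperturbed part. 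A short computation then gives
\begin{equation*}
f(\theta)(\bfx)\;\approx\;\frac{1}{z_{\rm av}}\begin{bmatrix}\lambda(-\omega_x y+\omega_y)+t_x\\ \lambda(\omega_x x-\omega_z)+t_y\end{bmatrix},
\end{equation*}
which is manifestly linear in $\theta=(\omega_x,\omega_y,\omega_z,t_x,t_y)$; note that $t_z$ drops out entirely, confirming the claimed unobservability of depth-axis translation. Thus $f(\theta)(\bfx)=\frac{1}{z_{\rm av}}M(\bfx)\theta$ for the $2\times5$ matrix $M(\bfx)$ read off from the display above.

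Given this linearity, the isometry computation is identical in form to the affine case. For two articulations $\theta_1,\theta_2$,
\begin{equation*}
d(f_{\theta_1},f_{\theta_2})^2=\int_{\cal X}\|f_{\theta_1}(\bfx)-f_{\theta_2}(\bfx)\|_2^2\,d\bfx=(\theta_1-\theta_2)^T\Sigma(\theta_1-\theta_2),\qquad \Sigma=\frac{1}{z_{\rm av}^2}\int_{\cal X}M(\bfx)^TM(\bfx)\,d\bfx.
\end{equation*}
If $\Sigma$ is positive definite, then $d(f_{\theta_1},f_{\theta_2})=\|\theta_1-\theta_2\|_\Sigma$ is a Mahalanobis distance on the articulation space, which is exactly the local isometry assertion. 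The \emph{local} qualifier is inherited solely from the small-rotation linearization, valid only in a neighborhood of $\theta_\rf$ where $e^{\Omega_\omega}\approx{\mathbb I}+\Omega_\omega$ holds.

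The main obstacle is therefore the positive-definiteness of $\Sigma$. As a Gram matrix it is automatically positive semidefinite, and it is positive definite exactly when the five columns of $M(\bfx)$, viewed as $L^2({\cal X})$-valued flow fields, are linearly independent. I would write these basis fields out explicitly --- $(-\lambda y,\lambda x)$, $(\lambda,0)$, $(0,-\lambda)$, $(1,0)$ and $(0,1)$ --- and test independence via $M(\bfx)\theta\equiv0$, i.e.
\begin{equation*}
\lambda_\rf(\bfx)\bigl(\omega_y-y\,\omega_x\bigr)+t_x\equiv0,\qquad \lambda_\rf(\bfx)\bigl(x\,\omega_x-\omega_z\bigr)+t_y\equiv0.
\end{equation*}
Here a subtlety absent from the affine case surfaces: if the scene is fronto-parallel, $\lambda_\rf\equiv\lambda_0$ constant, then $(\lambda_0,0)\parallel(1,0)$ and $(0,-\lambda_0)\parallel(0,1)$, so the $\omega_y/t_x$ and $\omega_z/t_y$ fields coincide up to scale, the two identities above acquire a two-dimensional solution set, and $\Sigma$ is singular --- the classical rotation--translation ambiguity of weak perspective.

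Consequently the lemma needs the depth map $\lambda_\rf$ to be non-constant with enough variation across ${\cal X}$: generic variation of $\lambda_\rf$ forces the $\bfx$-dependent factors $(\omega_y-y\,\omega_x)$ and $(x\,\omega_x-\omega_z)$ to vanish identically, hence $\omega_x=\omega_y=\omega_z=0$ and then $t_x=t_y=0$, so that $\theta=0$ and $\Sigma\succ0$. I expect that isolating the precise condition on $\lambda_\rf$ guaranteeing full rank --- and deciding whether to impose it as an explicit hypothesis (as was done for depth bounded away from zero in the smoothness lemma) or to absorb it into a genericity assumption on the scene --- is the only genuinely delicate point of the proof; the remaining steps are the routine substitution and quadratic-form manipulation above.
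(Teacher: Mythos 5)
Your proposal reproduces the paper's argument essentially step for step: substitute the weak-perspective projection and the small-rotation linearization $e^{\Omega_\omega}\approx{\mathbb I}+\Omega_\omega$ into (\ref{eqn:poseFlow}), observe that the resulting flow is linear in the articulation (with $t_z$ dropping out), and express the squared $L^2$ distance between two flows as the quadratic form $(\theta_1-\theta_2)^T\Sigma\,(\theta_1-\theta_2)$; your $M(\bfx)$ is exactly the paper's $A(\bfx)$ up to the factor $1/z_{\rm av}$, and your per-pixel flow formula matches the paper's computation of $f(\theta_1)(\bfx)-f(\theta_2)(\bfx)$.

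The one point where you genuinely diverge is the rank analysis of $\Sigma$. The paper stops at exhibiting the Gram matrix $\int_{\bfx\in{\cal X}}A^T(\bfx)A(\bfx)\,d\bfx$ and immediately concludes local isometry; unlike in the affine lemma, it never verifies positive definiteness. Your observation that this can fail is correct: for a fronto-parallel scene, $\lambda_\rf(\bfx)\equiv\lambda_0$, the flow fields generated by $\omega_y$ and $t_x$ (and by $\omega_z$ and $t_y$) are parallel, so $\Sigma$ is singular, the parameter-to-flow map is not injective, and ``isometric'' fails in the strict sense --- this is the classical rotation--translation ambiguity of weak perspective. So the extra hypothesis you propose (sufficient variation of the depth map $\lambda_\rf$, imposed explicitly or as a genericity assumption) is genuinely needed for the lemma as stated; the paper leaves it implicit. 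In that one respect your proof is more complete than the paper's own.
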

\begin{proof}
Without loss of generality, we denote our reference articulation as $\theta_\rf = {\bf 0}$, which corresponds to identity rotation matrix and null translation vector.
We are interested in the distance between two optical flow fields $f(\theta_1)$ and $f(\theta_2)$.
Under our assumptions of weak perspective imaging model and small rotation, we can compute the distance between the optical flows to be
\begin{eqnarray*}
\lefteqn{f(\theta_1)(\bfx) - f(\theta_2)(\bfx)}\\
&=& 
P_{\rm wp} \left( \lambda_\rf R_1 \left[ \begin{array}{c} \bfx \\ 1 \end{array}\right] + \bft_1 \right)  - P_{\rm wp} \left( \lambda_\rf R_2 \left[ \begin{array}{c} \bfx \\ 1 \end{array}\right] + \bft_2 \right) \\
&=&\frac{1}{z_{\rm av}} \left[
\begin{array}{c}
( - \lambda_\rf( \omega_{1,x} y -  \omega_{1, y}) + t_{1, x})-( - \lambda_\rf (\omega_{2,x} y -  \omega_{2, y}) + t_{1, x}) \\
(\lambda_\rf (\omega_{1,x} x - \omega_{1,z}) + t_{1, y}) - (\lambda_\rf(\omega_{2,x} x - \omega_{2,z}) + t_{2, y})
\end{array}
\right] \\
&=& \frac{1}{z_{\rm av}} \left[
\begin{array}{c}
 - \lambda_\rf ( \omega_{1,x}-\omega_{2,x}) y +\lambda_\rf (\omega_{1, y}-\omega_{2,y})  + (t_{1, x} - t_{2,x}) \\
\lambda_\rf (\omega_{1,x} - \omega_{2,x})  x - \lambda_\rf (\omega_{1,z}-\omega_{2, z})  + (t_{1, y} - t_{2, y})
\end{array}
\right]
\end{eqnarray*}
Notice that both $t_{1,z}$ and $t_{2,z}$ are absent in the expression above. As noted above, this is due to the assumption of weak perspective imaging model which makes the translation of the camera along the $z$-axis unobservable.
A key observation is that the articulation parameters $\theta_1 = (\omega_1, \bft_1)$ and $\theta_2 = (\omega_2, \bft_2)$ can be expressed as a function that is linear in the expression above.
Hence, 
$$f(\theta_1)(\bfx) - f(\theta_2)(\bfx) = A(\bfx)\left[ \begin{array}{c} \omega_1-\omega_2 \\ \bft_1 - \bft_2 \end{array} \right].$$
Finally, summing over $\bfx \in {\cal X}$, we obtain
$$d(f(\theta_1), f(\theta_2))^2 = (\theta_1 - \theta_2)^T \left(\int_{\bfx \in {\cal X}} A^T(\bfx)A(\bfx) d\bfx \right) (\theta_1 - \theta_2).$$
Hence, the OFM is locally isometric. 
\end{proof}

\bibliographystyle{elsarticle-num}
\bibliography{opflowTheory}

\end{document}